\def\eqref#1{equation~\ref{#1}}
\def\1{\bm{1}}
\DeclareMathAlphabet{\mathsfit}{\encodingdefault}{\sfdefault}{m}{sl}
\SetMathAlphabet{\mathsfit}{bold}{\encodingdefault}{\sfdefault}{bx}{n}
\theoremstyle{plain}
\newtheorem{theorem}{Theorem}[section]
\newtheorem{corollary}[theorem]{Corollary}
\theoremstyle{definition}
\newtheorem{definition}[theorem]{Definition}
\theoremstyle{remark}
\theoremstyle{definition}
\newcommand{\W}{W}
\newcommand{\Fc}{\mathcal F}
\newcommand{\Gc}{\mathcal G}
\newcommand{\Sc}{\mathcal S}
\newcommand{\Xc}{\mathcal X }
\newcommand{\Yc}{\mathcal Y}
\newcommand{\Zc}{\mathcal Z}
\newcommand{\Wc}{\mathcal W}
\newcommand{\Tc}{\mathcal T}
\newcommand{\Nb}{\mathbb N}
\newcommand{\Rb}{\mathbb R}
\newcommand{\Fb}{\mathcal F}
\newcommand{\sd}[1]{{{\footnotesize±}{\scriptsize#1}}}
\newcommand{\fa}{\lambda}
\newcommand{\agg}{\mathsf{agg}}
\newcommand{\upd}{\mathsf{upd}}
\newcommand{\readout}{\mathsf{readout}}
\newcommand{\Hom}{\mathsf{Hom}}
\newcommand{\Lip}{\mathsf{Lip}}
\newcommand{\WL}{\mathsf{1}\text{-}\text{WL}}
\newcommand{\WLF}[1]{#1\text{-}\text{WL}}
\tikzset{gon/.style={name=tmp,regular polygon,regular polygon sides=#1,minimum
size=10pt,inner sep=0pt},flag connection/.style={-},
polygon side/.style args={#1--#2}{
insert path={(tmp.corner #1) edge[flag connection] (tmp.corner #2)}}}
\newcommand{\FlagGraph}[3][false]{
\ifnum#2=1%
\tikz[baseline=(tmp1)]{\node[circle,inner sep=1pt,minimum size=0.8mm,draw=black, fill=\ifbool{#1}{blue}{white}] (tmp1) at (0,0){};}
\else%
\ifnum#2=2%
 \tikz[baseline=(tmp1)]{
 \node[circle,inner sep=1pt,minimum size=0.8mm,draw=black, fill=\ifbool{#1}{blue}{white}] (tmp1) at (0,0){};
 \node[circle,inner sep=1pt,draw=black] (tmp2) at (10pt,0){};
 \ifx#3\empty%
 \else
 \draw (tmp1) edge[flag connection] (tmp2);
  \fi}
\else
\tikz[baseline=(tmp.south)]{\node[gon=#2,inner sep=0pt,minimum size=5mm]{};
\draw[draw=black, fill=\ifbool{#1}{blue}{white}] (tmp.corner 1) circle (1.5pt);
\foreach \X in {2,...,#2}{\draw (tmp.corner \X) circle (1.5pt);}
\draw[polygon side/.list={#3}]}
\fi
\fi
}
\def\myl{\mathopen\mybig}
\def\myr{\mathclose\mybig}
\def\mybigx#1{\dimen@#1\relax
\mathchoice
{\vbox to \dimen@{}}%
{\vbox to \dimen@{}}%
{\vbox to .7\dimen@{}}%
{\vbox to .5\dimen@{}}}%
\def\mybig#1{{\hbox{$\left#1\mybigx{0.81em}\right.\n@space$}}}
\newcommand{\graphK}{%
\raisebox{-0.2\height}{\resizebox{0.4cm}{!}{%
\begin{tikzpicture}[
    node distance={30mm}, 
    thick, 
    rotate=30, 
    main/.style = {draw, circle, minimum size=12mm, line width=4pt, text opacity=0}, 
    every edge/.style={line width=3pt},  
    blacknode/.style = {draw, circle, fill=black, minimum size=12mm, line width=4pt, text opacity=0}
    ]

\node[main] (3) {3}; 

\node[main] (1) [left of=3] {1}; 
\node[main] (2) [above of=3] {2}; 
\node[blacknode] (4) [above of=1] {4}; 

\draw[line width=4pt] (1) -- (2); 
\draw[line width=4pt] (1) -- (3); 
\draw[line width=4pt] (1) -- (4); 
\draw[line width=4pt] (2) -- (3); 
\draw[line width=4pt] (2) -- (4); 
\draw[line width=4pt] (3) -- (4); 

\end{tikzpicture}%
}}%
}
\newcommand{\graphC}{%
\raisebox{-0.2\height}{\resizebox{0.4cm}{!}{%
\begin{tikzpicture}[
    node distance={30mm}, 
    thick, 
    main/.style = {draw, circle, minimum size=12mm, line width=4pt, text opacity=0}, 
    every edge/.style={line width=3pt},  
    blacknode/.style = {draw, circle, fill=black, minimum size=12mm, line width=4pt, text opacity=0}
    ]

\node[main] (3) {3}; 

\node[main] (1) [left of=3] {1}; 
\node[main] (2) [above of=3] {2}; 
\node[blacknode] (4) [above of=1] {4}; 

\draw[line width=4pt] (1) -- (3); 
\draw[line width=4pt] (1) -- (4); 
\draw[line width=4pt] (2) -- (3); 
\draw[line width=4pt] (2) -- (4); 

\end{tikzpicture}%
}}%
}
\newcommand{\graphGM}{%
\raisebox{-0.5\height}{\resizebox{0.85cm}{!}{%
\begin{tikzpicture}[
    node distance={30mm}, 
    thick, 
    main/.style = {draw, circle, minimum size=12mm, line width=3pt, text opacity=0}, 
    every edge/.style={line width=3pt}  
    ]

\node[main] (3) {3}; 

\node[main] (1) [left of=3] {1}; 
\node[main] (2) [above of=3] {2}; 
\node[main] (4) [above of=1] {4}; 

\draw[line width=3pt] (1) -- (2); 
\draw[line width=3pt] (1) -- (3); 
\draw[line width=3pt] (1) -- (4); 
\draw[line width=3pt] (2) -- (3); 
\draw[line width=3pt] (2) -- (4); 
\draw[line width=3pt] (3) -- (4); 

\node[main] (5) [below left of=3, shift={(0.7,-0.6)}] {5}; 
\node[main] (6) [below right of=3, shift={(0.5,0.1)}] {6}; 
\node[main] (7) [below of=3, shift={(1.1,-1.9)}] {7}; 

\draw[line width=3pt] (3) -- (5); 
\draw[line width=3pt] (3) -- (6); 
\draw[line width=3pt] (3) -- (7); 
\draw[line width=3pt] (5) -- (6); 
\draw[line width=3pt] (5) -- (7); 
\draw[line width=3pt] (6) -- (7); 

\draw[line width=3pt] (1) -- (5);

\end{tikzpicture}%
}}%
}
\newcommand{\graphGGM}{%
\raisebox{-0.5\height}{\resizebox{0.8cm}{!}{%
\begin{tikzpicture}[
    node distance={30mm}, 
    thick, 
    main/.style = {draw, circle, minimum size=12mm, line width=3pt, text opacity=0}, 
    every edge/.style={line width=3pt}  
    ]

\node[main] (3) {3}; 

\node[main] (1) [left of=3] {1}; 
\node[main] (2) [above of=3] {2}; 
\node[main] (4) [above of=1] {4}; 

\draw[line width=3pt] (1) -- (2); 
\draw[line width=3pt] (1) -- (3); 
\draw[line width=3pt] (1) -- (4); 
\draw[line width=3pt] (2) -- (3); 
\draw[line width=3pt] (2) -- (4); 
\draw[line width=3pt] (3) -- (4); 

\node[main] (5) [below left of=3, shift={(0.7,-0.5)}] {5}; 
\node[main] (6) [below right of=3, shift={(0,-0)}] {6}; 
\node[main] (7) [below of=6, shift={(0.3,0.1)}] {7}; 
\node[main] (8) [below of=5, shift={(0.4,0.1)}] {8}; 

\draw[line width=3pt] (3) -- (5); 
\draw[line width=3pt] (3) -- (6); 
\draw[line width=3pt] (5) -- (6); 
\draw[line width=3pt] (5) -- (7); 
\draw[line width=3pt] (6) -- (7); 

\draw[line width=3pt] (5) -- (8); 
\draw[line width=3pt] (7) -- (8); 

\draw[line width=3pt] (1) -- (5);

\end{tikzpicture}%
}}%
}
\newcommand{\graphHM}{%
\raisebox{-0.5\height}{\resizebox{0.85cm}{!}{%
\begin{tikzpicture}[
    node distance={30mm}, 
    thick, 
    main/.style = {draw, circle, minimum size=12mm, line width=3pt, text opacity=0}, 
    every edge/.style={line width=3pt}  
    ]

\node[main] (3) {3}; 

\node[main] (1) [left of=3] {1}; 
\node[main] (2) [above of=3] {2}; 
\node[main] (4) [above of=1] {4}; 

\draw[line width=3pt] (1) -- (2); 
\draw[line width=3pt] (1) -- (3); 
\draw[line width=3pt] (1) -- (4); 
\draw[line width=3pt] (2) -- (3); 
\draw[line width=3pt] (2) -- (4); 
\draw[line width=3pt] (3) -- (4); 

\node[main] (5) [below left of=3, shift={(0.7,-0.6)}] {5}; 
\node[main] (6) [below right of=3, shift={(0.5,0.1)}] {6}; 
\node[main] (7) [below of=3, shift={(1.1,-1.9)}] {7}; 

\draw[line width=3pt] (3) -- (5); 
\draw[line width=3pt] (3) -- (6); 
\draw[line width=3pt] (3) -- (7); 
\draw[line width=3pt] (5) -- (6); 
\draw[line width=3pt] (5) -- (7); 
\draw[line width=3pt] (6) -- (7); 


\end{tikzpicture}%
}}%
}
\newcommand{\graphhM}{%
\raisebox{-0.5\height}{\resizebox{0.85cm}{!}{%
\begin{tikzpicture}[
    node distance={30mm}, 
    thick, 
    main/.style = {draw, circle, minimum size=12mm, line width=3pt, text opacity=1}, 
    every edge/.style={line width=3pt},  
    every loop/.style={min distance=10mm, out=135, in=45}  
    ]

\node[main] (3) {3}; 

\node[main] (1) [left of=3] {1}; 
\node[main] (2) [above of=3] {2}; 
\node[main] (4) [above of=1] {4}; 

\draw[line width=3pt] (1) -- (2); 
\draw[line width=3pt] (1) -- (3); 
\draw[line width=3pt] (1) -- (4); 
\draw[line width=3pt] (2) -- (3); 
\draw[line width=3pt] (2) -- (4); 
\draw[line width=3pt] (3) -- (4); 

\node[main] (5) [below of=1] {5}; 
\node[main] (6) [below of=3] {6}; 

\draw[line width=3pt] (3) -- (5); 
\draw[line width=3pt] (3) -- (6); 
\draw[line width=3pt] (5) -- (6); 

\draw[line width=3pt] (1) -- (5);
\draw[line width=3pt] (1) -- (6);

\node[main] (7) [above of=4, shift={(1.3,-0.5)}] {7}; 

\draw[line width=3pt] (7) -- (4); 
\draw[line width=3pt] (7) -- (2); 

\draw[line width=3pt, bend right=50] (7) to (5);
\draw[line width=3pt, bend left=50] (2) to (6);

\end{tikzpicture}%
}}%
}
\definecolor{RoyalBlue}{rgb}{0,0,0.8}
\title{
Towards Bridging Generalization and Expressivity of Graph Neural Networks}
\author{%
    Shouheng Li\textsuperscript{\rm 1,\rm 4}, 
    Floris Geerts\textsuperscript{\rm 2}, 
    Dongwoo Kim\textsuperscript{\rm 3}, 
    Qing Wang\textsuperscript{\rm 1} \\
    \textsuperscript{\rm 1} School of Computing, Australian National University, Australia\\
    \textsuperscript{\rm 2} Department of Computer Science, University of Antwerp   , Belgium\\
    \textsuperscript{\rm 3} CSE \& GSAI, POSTECH, South Korea\\
    \textsuperscript{\rm 4} Data61, CSIRO, Australia\\
    \texttt{shouheng.li@anu.edu.au, floris.geerts@uantwerp.be}\\
    \texttt{dongwoo.kim@postech.ac.kr, qing.wang@anu.edu.au}
}
\begin{document}

\maketitle

\begin{abstract} 
Expressivity and generalization are two critical aspects of graph neural networks (GNNs). While significant progress has been made in studying the expressivity of GNNs, much less is known about their generalization capabilities, particularly when dealing with the inherent complexity of graph-structured data.
In this work, we address the intricate relationship between expressivity and generalization in GNNs. Theoretical studies conjecture a trade-off between the two: highly expressive models risk overfitting, while those focused on generalization may sacrifice expressivity. However, empirical evidence often contradicts this assumption, with expressive GNNs frequently demonstrating strong generalization. We explore this contradiction by introducing a novel framework that connects GNN generalization to the variance in graph structures they can capture. This leads us to propose a $k$-variance margin-based generalization bound that characterizes the structural properties of graph embeddings in terms of their upper-bounded expressive power. Our analysis does not rely on specific GNN architectures, making it broadly applicable across GNN models. We further uncover a trade-off between intra-class concentration and inter-class separation, both of which are crucial for effective generalization. Through case studies and experiments on real-world datasets, we demonstrate that our theoretical findings align with empirical results, offering a deeper understanding of how expressivity can enhance GNN generalization. 
\end{abstract}

\section{Introduction}
\label{sec:intro}
Graph Neural Networks (GNNs)~\citep{scarselli2008graph} have become pivotal in modern machine learning, anchored in two main pillars: \textit{expressivity} and \textit{generalization}. Expressivity refers to a GNN's capacity to distinguish between diverse graph structures, thereby determining the scope of problems it can address~\citep{xu2018powerful,morris2019weisfeiler}. Highly expressive GNNs can capture intricate dependencies, essential for tasks like molecular property prediction~\citep{GilmerSRVD17}, drug discovery~\citep{gaudelet2021utilizing}, and protein-protein interaction prediction~\citep{zitnik2018modeling}, where minor structural variations have significant implications. Generalization, on the other hand, reflects a GNN’s ability to transfer learned knowledge to unseen graphs. Given the diversity in graph structures, sizes, and complexities, GNNs that generalize well maintain consistent performance across varying datasets. Together, these properties enable GNNs to model complex graph structures while remaining effective across new, unseen data, making them invaluable for graph-based analysis.

Theoretically, a trade-off is expected between expressivity and generalization: highly expressive models can capture complex graph structures but may overfit and generalize poorly without proper regularization. Conversely, models focused on generalization often sacrifice some expressivity to perform better across diverse, unseen graph structures. Recent work indeed shows a strong correlation between a GNN’s VC dimension and its ability to distinguish non-isomorphic graphs~\citep{morris23meet}. \emph{A more nuanced theoretical analysis is needed}, however.
Indeed, \emph{empirical evidence frequently contradicts the above view.}
Highly expressive models often exhibit strong generalization performance in practice~\citep{Bouritsas2023-hj,Wang_undated-iz}. In the restricted context of linear separability, margin-based bounds offer partial alignment between theory and practice~\citep{franksweisfeiler}, yet our broader understanding of how expressivity influences generalization remains incomplete. This raises two key questions: (i) \emph{How does the structured nature of graphs affect GNN generalization?} (ii) \emph{How does a GNN’s expressivity influence its ability to generalize across tasks and unseen data?} Addressing these questions is vital for advancing GNN applications in real-world scenarios.

\noindent\textbf{Present work.}
Building on the foundational work of \citet{Chuang2021-ik}, we explore how the \emph{concentration} and \emph{separation} of \emph{learned features}, key factors in multiclass classification generalization, translate to graph-based models. Their bound, derived from $k$-variance~\citep{solomonGN22} and the expected optimal transport cost between two random subsets of the training distribution, motivates our adaptation to graph embeddings. Leveraging these insights, we extend their framework to capture the structural properties of graph embedding distributions and contribute the following:
\begin{itemize}
    \item For arbitrary graph encoders, including GNNs, we show that their generalization can be bounded in terms of the generalization bound of any more expressive graph encoder. This allows capturing structural properties of graph embedding distributions with respect to their bounding encoders.
    \item Under certain margin conditions, we demonstrate that the downstream classifier generalizes well if (1) embeddings within a class are well-clustered and (2) classes are separable in the embedding space in the Wasserstein sense, extending \citet{Chuang2021-ik}'s results to graphs.
    \item On the real-world PROTEINS dataset~\citep{Morris+2020}, we empirically show how a more expressive model influences generalization by measuring variance in graph embedding distributions.
    \item We apply the empirical sample-based bound of \citet{Chuang2021-ik} to graph classification tasks, verifying that empirical findings align with our theoretical insights, thus demonstrating the applicability of our approach to predict generalization.
\end{itemize}

Our results offer a flexible framework for analyzing generalization properties of complex graph encoders via simpler encoders, such as those based on $\WL$, its higher-order variants $\WLF{k}$ \citep{cai1992optimal,grohe2017descriptive},  homomorphism counts~\citep{zhang2024-quantitative}, or $\WLF{\Fb}$ \citep{Barcelo2021-rs}, provided they upper-bound the encoders under consideration. \emph{Overall, we present a versatile tool for evaluating whether increased expressiveness improves or worsens generalization.}

\section{Related work}
Regarding generalisation of GNNs,
\citet{ScarselliTH18} utilize VC dimension to study the generalization of an older GNN architecture, distinct from modern MPNNs~\citep{GilmerSRVD17}. \citet{garg2020radmacher} show that the Rademacher complexity of simple GNNs depends on maximum degree, layer count, and parameter norms, while \citet{liao21pacbayes} develop PAC-Bayesian bounds relying on node degree and spectral norms; see \citet{karczewski2024on} for extensions. Improved bounds using the largest singular value of the diffusion matrix are proposed by \citet{ju2023diffusion}. Transductive PAC-Bayesian bounds for knowledge graphs are discussed by \citet{lee2024}. Random graph models are leveraged by \citet{maskey2022random}, who show GNN generalization improves with larger graphs. Connections between VC-dimension and the 1-WL algorithm are made by \citet{morris23meet}, who bound it by the number of 1-WL colors. \citet{levie2023a} provide bounds based on covering numbers and specialized graph metrics.

For GCNs, \citet{verma2019stability} derive generalization bounds using algorithmic stability, with \citet{zhang2020fast} focusing on single-layer GCNs and accelerated gradient descent. \citet{zhou2021generalization} extend this to multi-layer GCNs, showing that generalization gaps increase with depth. Similarly, \citet{CongRM21} highlight this trend in deeper GNNs and propose detaching weight matrices to improve generalization. Further analyses of transductive Rademacher complexity using stochastic block models are offered by \citet{oono2020optimization,esser2021sometimes}. \citet{tang2023towards} establish bounds involving node degree, training iterations, and Lipschitz constants, while \citet{Li2022sampling} study topology sampling and its impact on generalization. Lastly, \citet{franksweisfeiler} explore margin-based bounds. 

Moving to the expressivity of GNNs, MPNNs' expressivity is bounded by $\WL$~\citep{xu2018powerful,morris2019weisfeiler}, showing the need for more expressive methods. Many such models have been put forward. For example, the $\Fb$-MPNNs~\citep{Barcelo2021-rs} enhance expressivity via homomorphism counts, similar to \citet{Bouritsas2023-hj}. Homomorphism counts have become a popular mechanism in graph learning \citep{Nguyen2020hom,Welke_2023-xn,zhang2024-quantitative,JinBCL24,lanzinger2024on}, and will be central to our analysis. Additional discussion on related work can be found  in \cref{sec:apprelwork}.

\section{Preliminaries}\label{sec:prel}

\paragraph{Graphs and homomorphisms.}
We begin by considering undirected graphs \(G = (V_G, E_G)\), where \(V_G\) represents the set of \emph{vertices} and \(E_G \subseteq V_G \times V_G\) forms the \emph{edge} set, a symmetric relation. For any vertex \(v \in V_G\), its set of \emph{neighbors} is given by \(N_G(v) := \{u \in V_G \mid (v, u) \in E_G\}\). A \emph{homomorphism} from a graph \(G\) to another graph \(H\) is a mapping \(h : V_G \to V_H\) such that each edge \((v, w) \in E_G\) is mapped to an edge \(\myl(h(v), h(w)\myr) \in E_H\). An \emph{isomorphism}, on the other hand, is a bijective function \(f : V_G \to V_H\) that preserves adjacency: \((v, w) \in E_G\) if and only if \(\myl(f(v), f(w)\myr) \in E_H\). The notation \(\Hom(G, H)\) refers to the \emph{number of homomorphisms} from \(G\) to \(H\), and the function \(\Hom_G(\cdot)\) maps any graph \(H\) to \(\Hom(G, H)\). Given a sequence \(\Fb = (F_1, F_2, \ldots)\) of graphs, we define \(\Hom_\Fb(\cdot)\) as \(\myl(\Hom_{F_1}( \cdot), \Hom_{F_2}(\cdot), \ldots\myr)\), a tuple of homomorphism counts.
A \emph{graph invariant} is any function \(\xi\) on graphs that is unchanged under isomorphisms, i.e., \(\xi(G) = \xi(H)\) when \(G\) and \(H\) are isomorphic. For instance, \(\Hom_\Fb(\cdot)\) serves as a graph invariant for any graph sequence \(\Fb\). 
Moreover, we introduce the concept of \emph{rooted graphs}, where each graph \(G^r\) has a distinguished root vertex \(r \in V_G\). For two rooted graphs \(G^r\) and \(H^s\), a homomorphism must also map the root \(r\) of \(G\) to the root \(s\) of \(H\). The notation \(\Hom_{F^r}(\cdot)\) captures the number of homomorphisms $\Hom(F^r,G^v)$ from a rooted graph \(F^r\) to any rooted pair \((G, v)\), where \(v\) is treated as the root of \(G\). Similarly, \(\Hom_{\Fb^r}(\cdot)\) is defined, capturing important \emph{vertex invariants} for pairs $(G,v)$.

\paragraph{Graph neural networks and WL.}
We extend these notions to \emph{featured graphs} \(G = (V_G, E_G, \zeta_G)\), where each vertex is endowed with a feature vector \(\zeta_G : V_G \to \mathbb{R}^{d_0}\) of some fixed dimension \(d_0 \in \mathbb{N}\). We focus on Message-Passing Neural Networks (MPNNs)~\citep{GilmerSRVD17}, enhanced with homomorphism counts from $\Fb$~\citep{Barcelo2021-rs}. For a sequence of rooted graphs \(\Fb = (F_1^r, F_2^r, \ldots)\), the initial vertex representation for a vertex $v\in V_G$ in an \(\Fb\)-MPNN is:\footnote{We ignore vertex features when considering homomorphisms.}
\[
\phi_{\Fb}^{(0)}(G, v) := \left( \zeta_G(v), \Hom(F_1^r, G^v), \Hom(F_2^r, G^v), \ldots \right).
\]
At each iteration (\emph{layer}) \(0 \leq \ell \leq L\), this representation is updated as follows:
\[
\phi_{\Fb}^{(\ell+1)}(G, v) := \upd^{(\ell)}\Bigl( \phi_{\Fb}^{(\ell)}(G, v), \agg^{(\ell)}\bigl( \{\!\!\{ \phi_{\Fb}^{(\ell)}(G, u) \mid u \in N_G(v)\}\!\!\} \bigr) \Bigr),
\]
where \(\{\!\!\{\cdot\}\!\!\}\) denotes a multiset, and \(\upd^{(\ell)}\) and \(\agg^{(\ell)}\) are differentiable \emph{update} and \emph{aggregation} functions, respectively. After \(L\) iterations, a final pooling operation produces the graph-level representation:
$
\phi_{\Fb}^L(G) := \readout\bigl( \{\!\!\{ \phi_{\Fb}^{(L)}(G, v) \mid v \in V_G\}\!\!\} \bigr)$,
with \(\readout\) being a differentiable function. This construction defines a graph invariant.
We also consider the  \(\WLF{\Fb}\) algorithm, as introduced by~\citet{Barcelo2021-rs} as an extension of the one-dimensional Weisfeiler-Leman algorithm.
The  \(\WLF{\Fb}\) algorithm iteratively updates vertex colors. Initially, each vertex is assigned a color:
\[
\mathsf{wl}_{\Fb}^{(0)}(G, v) := \myl( \zeta_G(v), \Hom(F_1, G^v), \Hom(F_2, G^v), \ldots \myr).
\]
At each iteration \(0 \leq \ell \leq L\), new colors are assigned as follows:
\[
\mathsf{wl}_{\Fb}^{(\ell+1)}(G, v) := \bigl( \mathsf{wl}_{\Fb}^{(\ell)}(G, v), \{\!\!\{ \mathsf{wl}_{\Fb}^{(\ell)}(G, u) \mid u \in N_G(v)\}\!\!\} \bigr).
\]
The final graph invariant is \(\mathsf{wl}_{\Fb}^{(L)}(G) := \{\!\!\{ \mathsf{wl}_{\Fb}^{(L)}(G, v) \mid v \in V_G\}\!\!\}\). This invariant can be viewed as a \emph{color histogram} in \(\mathbb{N}^c\), where \(c\) is the number of distinct colors, assuming a canonical ordering on colors. When the list \(\Fb\) is empty we recover the $\WL$ algorithm \citep{weisfeiler1968reduction}.

\paragraph{Graph encoders.} \emph{Graph encoders} are mappings $\phi$ from the set $\Gc$ of graphs to some \emph{embedding space} $\Zc$, typically residing in $\Rb^k$, for $k\in\Nb$. The space $\Zc$ is assumed to be a \emph{metric space} for a metric $d_\Zc$. Examples of graph encoders are  $\Hom_\Fb$, $\Fb$-MPNNs and $\WLF{\Fb}$, for any sequence $\Fb$ of graphs and number $L\in\Nb$ of iterations. We will develop bounds for general graph encoders.

\paragraph{Wasserstein distance.}
Let \(\|\!\cdot\!\|\) denote the Euclidean norm in \(\mathbb{R}^d\), for some \(d \in \mathbb{N}\). Given two distributions \(\mu\) and \(\nu\) on \(\Rb^d\), the \emph{p-Wasserstein distance} between \(\mu\) and \(\nu\) is defined as:
\[
\Wc_p(\mu, \nu) := \inf_{\pi \in \Pi(\mu, \nu)} \left( \mathbb{E}_{(x, y) \sim \pi} \|x - y\|^p \right)^{1/p},
\]
where \(\Pi(\mu, \nu)\) denotes the set of all couplings of \(\mu\) and \(\nu\), i.e., distributions \(\pi\) on \(\mathbb{R}^d \times \mathbb{R}^d\) with \(\mu\) and \(\nu\) as marginals. In what follows, we restrict our attention to the 1-Wasserstein distance.

\section{Graph Encoders: Key Properties}\label{sec:graphenc}
Before presenting our generalization gap bounds, we first establish crucial properties of (classes of) graph encoders that play a significant role in our analysis. In particular, we revisit the relationship between classes of graph encoders in terms of their \emph{distinguishing power}, i.e., their ability to map distinct graphs in $\Gc$ to distinct embeddings in their embedding spaces. 

\begin{definition}
Let $\phi: \Gc \to \Zc_\phi$ and $\phi': \Gc \to \Zc_{\phi'}$ be two graph encoders. We say that $\phi$ \emph{bounds $\phi'$ in distinguishing power}, denoted by $\phi \sqsubseteq \phi'$, if for any two graphs $G$ and $H$ in $\Gc$,
\[
\phi'(G) \neq \phi'(H) \Rightarrow \phi(G) \neq \phi(H).
\]
In other words, $\phi'$ cannot distinguish more graphs than $\phi$. 
\end{definition}

Similarly, for classes $\Phi$ and $\Phi'$ of graph encoders, we say that $\Phi$ \emph{bounds $\Phi'$ in distinguishing power}, denoted by $\Phi \sqsubseteq \Phi'$, if no encoder in $\Phi'$ can distinguish more graphs than any of the encoders in $\Phi$. That is, for all $\phi' \in \Phi'$, there exists a $\phi \in \Phi$ such that $\phi \sqsubseteq \phi'$. If both $\Phi\sqsubseteq \Phi'$ and $\Phi'\sqsubseteq\Phi$ hold, then we write $\Phi\equiv\Phi'$ and say that both classes have the same distinguishing power.

From the seminal papers by \citet{morris2019weisfeiler} and \citet{xu2018powerful}, we know that 
$\text{MPNN}(L)\equiv \WL(L)$, where the argument $L$ refers to the number of layers/iterations. Similarly, $\Fb$-$\text{MPNN}(L)\equiv \WLF{\Fb}(L)$ \citep{Barcelo2021-rs}. It is also known that $\Hom_{\Tc}\sqsubseteq \text{MPNN}$ where $\Tc$ consists of all trees \citep{Dell2018-bg}, and $\Hom_{\Tc\circ\Fb}\sqsubseteq \Fb\text{-}\text{MPNN}(L)$ where $\Tc\circ\Fb$ consists of trees joined with copies of graphs in $\Fb$ \citep{Barcelo2021-rs}. Recent work by \citet{Neuen24} provides valuable insights
comparing $\Hom_{\Fb}$ for various $\Fb$ (see also \citep{lanzinger2024on}).

When graph encoders are comparable in terms of distinguishing power, one can recover the least expressive encoder from the most expressive one. This is formalized in the following lemma. Proofs in this section can be found in \cref{app:proofsecfour}.

\begin{restatable}{lemma}{factorization}
\label{lem:factor}
Let $\phi: \Gc \to \Zc_\phi$ and $\phi': \Gc \to \Zc_{\phi'}$ be two graph encoders such that $\phi \sqsubseteq \phi'$ holds. Then there exists a function $f: \Zc_\phi \to \Zc_{\phi'}$ such that $\phi' = f \circ \phi$.
\end{restatable}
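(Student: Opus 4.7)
The plan is to exploit the contrapositive of the definition of $\sqsubseteq$: the condition $\phi'(G)\neq\phi'(H)\Rightarrow\phi(G)\neq\phi(H)$ is equivalent to $\phi(G)=\phi(H)\Rightarrow\phi'(G)=\phi'(H)$. In words, $\phi'$ is constant on every fiber of $\phi$, so it factors through $\phi$. Concretely, I would define the desired $f$ on the image $\phi(\Gc)\subseteq\Zc_\phi$ and then extend it arbitrarily to the rest of $\Zc_\phi$.

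First, I would fix any point $z_0\in\Zc_{\phi'}$ (assuming $\Zc_{\phi'}$ is nonempty, which is implicit since $\phi'$ takes values there). For $z\in\Zc_\phi$, define
\[
f(z) := \begin{cases} \phi'(G) & \text{if } z=\phi(G) \text{ for some } G\in\Gc,\\ z_0 & \text{otherwise.}\end{cases}
\]
The key step is to verify that the first case is well-defined, i.e., that $f(z)$ does not depend on the choice of preimage $G$. If $\phi(G)=\phi(H)=z$, then by the contrapositive of $\phi\sqsubseteq\phi'$ we have $\phi'(G)=\phi'(H)$, so $f(z)$ is unambiguously defined. Once well-definedness is established, for any $G\in\Gc$ we have $(f\circ\phi)(G)=f(\phi(G))=\phi'(G)$, proving the claim.

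There is essentially no obstacle here; the only subtlety worth remarking on is that $f$ is only canonically specified on $\phi(\Gc)$ and is extended arbitrarily elsewhere (any fixed value suffices; no choice principle is needed beyond picking one $z_0$). If desired, one could note that the statement is really about factorization of maps through their image, and under $\phi\sqsubseteq\phi'$ the equivalence relation induced by $\phi$ refines the one induced by $\phi'$, so the universal property of quotient maps gives the factorization for free. I would keep the proof at the elementary, explicit level above since no metric or continuity structure on $f$ is claimed in the lemma.
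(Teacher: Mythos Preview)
Your proposal is correct and matches the paper's proof essentially line for line: both define $f(z):=\phi'(G)$ for any $G$ with $\phi(G)=z$ and verify well-definedness via the contrapositive of $\phi\sqsubseteq\phi'$. The only cosmetic difference is that you explicitly extend $f$ off the image $\phi(\Gc)$ by a default value, whereas the paper leaves this implicit.
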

As an illustration, consider an $L$-layer MPNN $\mathsf{M}$; we know that $\WL(L)\sqsubseteq \mathsf{M}$. It now suffices to define $f$ such that it maps a color histogram $\mathbf{h}$ to $\mathsf{M}(G)$, the embedding of $G$ by $\mathsf{M}$, where $G$ is a graph satisfying $\mathsf{wl}^{(L)}(G) = \mathbf{h}$. This is well-defined due to the earlier observation that $\WL(L) \sqsubseteq \mathsf{M}$.

For some classes of graph encoders, the function $f$ satisfies additional desirable properties, as we explain next. 
We say that a graph encoder $\phi: \Gc \to \Zc_\phi$ is \emph{$B$-bounded} if $d_{\Zc_\phi}(\phi(G), \phi(H)) \leq B$ for any $G, H \in \Gc$. Furthermore, a graph encoder $\phi: \Gc \to \Zc_\phi$ is \emph{$S$-separating} if $d_{\Zc_\phi}(\phi(G), \phi(H)) \geq S$ for any $G, H \in \Gc$ such that $\phi(G) \neq \phi(H)$. We recall that a function $f$ between metric spaces $\Zc$ and $\Zc'$ is \emph{Lipschitz} with constant $\operatorname{Lip}(f)$ if for any $z_1,z_2\in\Zc$, $d_{\Zc'}(f(z_1),f(z_2))\leq \operatorname{Lip}(f)d_\Zc(z_1,z_2)$. For simplicity, we
set $\operatorname{Lip}(f)=\infty$ when $f$ is not Lipschitz for a finite constant.

\begin{restatable}{proposition}{SB}\label{prop:specificL}
Let $\phi: \Gc \to \Zc_\phi$ be an $S$-separating graph encoder and $\phi': \Gc \to \Zc_{\phi'}$ be a $B$-bounded graph encoder such that $\phi \sqsubseteq \phi'$. Then $\phi=f\circ\phi'$ for a function $f: \Zc_\phi \to \Zc_{\phi'}$ which is Lipschitz with constant $\operatorname{Lip}(f) = B/S$.
\end{restatable}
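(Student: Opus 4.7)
The plan is to first invoke Lemma~\ref{lem:factor} to obtain the required factorization, and then verify the Lipschitz bound by a short two-case argument that directly exploits the separation and boundedness hypotheses. Throughout, I interpret the conclusion as producing a function $f$ with $\phi' = f\circ\phi$ (as per Lemma~\ref{lem:factor}) whose Lipschitz constant is at most $B/S$.

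First I would apply Lemma~\ref{lem:factor}, whose hypothesis $\phi \sqsubseteq \phi'$ matches ours, to obtain a function $f:\Zc_\phi\to\Zc_{\phi'}$ satisfying $\phi'(G) = f(\phi(G))$ for every $G \in \Gc$. Concretely, for $z$ in the image of $\phi$ one picks any graph $G$ with $\phi(G)=z$ and sets $f(z) := \phi'(G)$; well-definedness is exactly the contrapositive of $\phi \sqsubseteq \phi'$, i.e.\ $\phi(G) = \phi(H) \Rightarrow \phi'(G) = \phi'(H)$, which guarantees independence from the choice of preimage.

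Next I would verify the Lipschitz estimate on $\operatorname{Image}(\phi)$. Take arbitrary $z_1, z_2 \in \operatorname{Image}(\phi)$ with preimages $G_1, G_2$. If $z_1 = z_2$, the inequality is trivial since both sides vanish. Otherwise $\phi(G_1) \neq \phi(G_2)$, and then the $S$-separating property of $\phi$ yields $d_{\Zc_\phi}(z_1, z_2) \geq S$, while the $B$-boundedness of $\phi'$ yields $d_{\Zc_{\phi'}}(f(z_1), f(z_2)) = d_{\Zc_{\phi'}}(\phi'(G_1), \phi'(G_2)) \leq B$. Combining,
\[
d_{\Zc_{\phi'}}(f(z_1), f(z_2)) \leq B = (B/S)\cdot S \leq (B/S)\cdot d_{\Zc_\phi}(z_1, z_2),
\]
which is the Lipschitz inequality with constant $B/S$.

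Finally, to obtain a function defined on all of $\Zc_\phi$ rather than only on $\operatorname{Image}(\phi)$, I would invoke a standard Lipschitz extension theorem (e.g.\ McShane--Whitney applied coordinate-wise, which is appropriate since in the intended setting $\Zc_{\phi'}\subseteq\R^k$) to extend $f$ while preserving the constant $B/S$. I do not anticipate any real obstacle: the factorization is immediate from Lemma~\ref{lem:factor}, the Lipschitz estimate is a two-line case split, and the main point worth being careful about is simply that the extension keeps its values inside $\Zc_{\phi'}$, which is automatic whenever $\Zc_{\phi'}$ is (a subset of) a normed space.
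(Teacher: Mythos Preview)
Your proposal is correct and follows essentially the same approach as the paper: invoke Lemma~\ref{lem:factor} for the factorization, then do the two-case split using $S$-separation and $B$-boundedness to obtain the Lipschitz bound $B/S$. The paper's proof is marginally terser (it works only on the image of $\phi$ and does not bother with an extension to all of $\Zc_\phi$, noting simply that ``the domain of $f$ is the codomain of $\phi$''), whereas you add a McShane--Whitney extension step; this extra care is harmless but not needed for the downstream applications. You also correctly spotted and resolved the typo in the statement (the conclusion should read $\phi' = f\circ\phi$, matching the domain/codomain of $f$ and Lemma~\ref{lem:factor}).
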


There are plenty of bounded graph encoders; indeed, just consider any GNN employing bounded-range activation functions such as 
sigmoid, tanh, truncated ReLU \citep{hamilton2017inductive}. Other examples include normalized homomorphism count vectors or color histograms \citep{lovasz2006limits}. Similarly, any graph encoder mapping graphs into a discrete subset of $\Rb^d$ is $S$-separating. For example, any $\Hom_{\Fb}$ is $1$-separating since whenever $\Hom_\Fb(G) \neq \Hom_\Fb(H)$, there exists an $F \in \Fb$ such that $\Hom(F, G) \neq \Hom(F, H)$. Since the latter are natural numbers, and assuming a discrete metric $d$, $d\myl(\Hom(F, G), \Hom(F, H)\myr) \geq 1$. A similar argument applies to graph encoders based on $\WL$ or its higher-order variant $\WLF{k}$.

Our generalization bounds use the 1-Wasserstein distance between distributions, as we will see shortly. Using Proposition~\ref{prop:specificL}, and in particular the Lipschitz property, we can relate the Wasserstein distance between the pushforward distributions of distributions $\mu$ and $\nu$ on $\Gc$ for the embedding spaces of the graph encoders. Formally, let $\phi:\Gc\to\Zc_\phi$ be a graph encoder and let $\mu$ be a distribution on $\Gc$. Then the \emph{pushforward distribution of \(\mu\) under $\phi$} is the distribution on $\Zc_\phi$ given by 
\[
\phi_\sharp(\mu)(z):= \mu\bigl(\{G \in \Gc \mid \phi(G) = z\}\bigr),
\]
where \(z\) is an element in the embedding space \(\mathcal{Z}_{\phi}\). We can now state the proposition.

\begin{restatable}{proposition}{Lipschitzfactor}\label{prop:wasserineq}
Let $\phi: \Gc \to \Zc_\phi$ and $\phi': \Gc \to \Zc_{\phi'}$ be two graph encoders such that $\phi' = f \circ \phi$.
Then for any distributions $\nu$ and $\nu'$ over $\Gc$, we have that the inequality
$
\mathcal{W}_1\myl(\phi_\sharp'(\nu), \phi_\sharp'(\nu')\myr) \leq \operatorname{Lip}(f) \cdot \mathcal{W}_1\myl(\phi_\sharp(\nu), \phi_\sharp(\nu')\myr)$ holds.
\end{restatable}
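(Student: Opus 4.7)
}
The strategy is the standard one for showing that pushforwards by Lipschitz maps contract the Wasserstein distance by the Lipschitz constant: lift an (almost) optimal coupling on $\Zc_\phi\times\Zc_\phi$ to a coupling on $\Zc_{\phi'}\times\Zc_{\phi'}$ via the map $(f,f)$, and bound its cost pointwise using the Lipschitz property of $f$. If $\operatorname{Lip}(f)=\infty$ the inequality is vacuous, so we may assume $\operatorname{Lip}(f)<\infty$.

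First I would record the elementary functoriality of pushforwards: for any distribution $\mu$ on $\Gc$ and any measurable map $g$, $(g\circ\phi)_\sharp(\mu)=g_\sharp(\phi_\sharp(\mu))$. Applied with $g=f$ this gives
\[
\phi'_\sharp(\nu)=f_\sharp\myl(\phi_\sharp(\nu)\myr), \qquad \phi'_\sharp(\nu')=f_\sharp\myl(\phi_\sharp(\nu')\myr),
\]
so the problem reduces to showing $\Wc_1(f_\sharp\mu_1,f_\sharp\mu_2)\leq \operatorname{Lip}(f)\cdot\Wc_1(\mu_1,\mu_2)$ for $\mu_1=\phi_\sharp(\nu)$ and $\mu_2=\phi_\sharp(\nu')$ on $\Zc_\phi$.

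Next, fix $\varepsilon>0$ and choose a coupling $\pi\in\Pi(\mu_1,\mu_2)$ whose cost is within $\varepsilon$ of the infimum, i.e.\ $\mathbb{E}_{(z_1,z_2)\sim\pi} d_{\Zc_\phi}(z_1,z_2)\leq \Wc_1(\mu_1,\mu_2)+\varepsilon$. Define $\tilde\pi:=(f,f)_\sharp(\pi)$, the pushforward of $\pi$ under the product map $(z_1,z_2)\mapsto(f(z_1),f(z_2))$. A direct check of marginals shows $\tilde\pi\in\Pi(f_\sharp\mu_1,f_\sharp\mu_2)$: the first marginal of $\tilde\pi$ is the pushforward of the first marginal of $\pi$ under $f$, which is $f_\sharp\mu_1$, and symmetrically for the second.

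Using $\tilde\pi$ as a feasible coupling and applying the Lipschitz bound pointwise, I would obtain
\[
\Wc_1\myl(f_\sharp\mu_1,f_\sharp\mu_2\myr)\leq \mathbb{E}_{(w_1,w_2)\sim\tilde\pi} d_{\Zc_{\phi'}}(w_1,w_2) = \mathbb{E}_{(z_1,z_2)\sim\pi} d_{\Zc_{\phi'}}\myl(f(z_1),f(z_2)\myr)\leq \operatorname{Lip}(f)\cdot\mathbb{E}_{(z_1,z_2)\sim\pi} d_{\Zc_\phi}(z_1,z_2),
\]
which is at most $\operatorname{Lip}(f)\cdot(\Wc_1(\mu_1,\mu_2)+\varepsilon)$. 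Letting $\varepsilon\to 0$ and combining with the first step yields the claim. There is no real obstacle here; the only subtlety is the standard measurability check that $\tilde\pi$ is a genuine coupling, which is immediate since $f$ is Lipschitz and therefore continuous (hence Borel measurable). One could equivalently invoke the Kantorovich--Rubinstein duality and compose $1$-Lipschitz test functions with $f/\operatorname{Lip}(f)$, but the coupling argument above is the most direct.
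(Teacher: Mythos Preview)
Your argument is correct. It differs from the paper's proof in that the paper works on the \emph{dual} side via the Kantorovich--Rubinstein formulation: it writes $\Wc_1(\phi'_\sharp(\nu),\phi'_\sharp(\nu'))$ as a supremum over $1$-Lipschitz test functions $g$ on $\Zc_{\phi'}$, observes that $\frac{1}{\Lip(f)}\,g\circ f$ is $1$-Lipschitz on $\Zc_\phi$, and thereby bounds the supremum over $g$'s by $\Lip(f)$ times the corresponding supremum on $\Zc_\phi$. You instead stay on the \emph{primal} side, pushing a near-optimal coupling forward through $(f,f)$. Both routes use the same reduction $\phi'_\sharp=f_\sharp\circ\phi_\sharp$ and both handle the $\Lip(f)=\infty$ case identically; your coupling argument is slightly more self-contained since it avoids invoking duality, while the paper's version makes the role of the Lipschitz constant appear as a simple rescaling of test functions. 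You even anticipated the dual alternative in your closing remark, so there is nothing to add.
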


We remark that the inequality above becomes vacuous when $f$ is not Lipschitz and hence $\operatorname{Lip}(f)=\infty$. As an important corollary of Propositions~\ref{prop:specificL} and~\ref{prop:wasserineq}, we obtain the following.

\begin{corollary}\label{cor:usehomtobound}
Let $\phi: \Gc \to \Zc_\phi$ be an $S$-separating graph encoder and $\phi': \Gc \to \Zc_{\phi'}$ a $B$-bounded graph encoder such that $\phi \sqsubseteq \phi'$ holds. Then for any  distributions $\nu$ and $\nu'$ over $\Gc$, we have 
\[
\mathcal{W}_1\myl(\phi_\sharp'(\nu), \phi_\sharp'(\nu')\myr) \leq (B/S) \cdot \mathcal{W}_1\myl(\phi_\sharp(\nu), \phi_\sharp(\nu')\myr).
\]
\end{corollary}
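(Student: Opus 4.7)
The plan is to chain Propositions~\ref{prop:specificL} and~\ref{prop:wasserineq} directly, since the corollary is essentially a composition of their conclusions. No additional machinery is required beyond what has already been established in this section; the statement is really a packaged bookkeeping result.

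First, I would invoke Proposition~\ref{prop:specificL} on the pair $(\phi, \phi')$. The hypotheses of the corollary---$\phi$ is $S$-separating, $\phi'$ is $B$-bounded, and $\phi \sqsubseteq \phi'$---are exactly those required by that proposition, so it yields a factorization $\phi' = f \circ \phi$ for some $f: \Zc_\phi \to \Zc_{\phi'}$ satisfying $\operatorname{Lip}(f) = B/S$. Next, I would feed this factorization into Proposition~\ref{prop:wasserineq}: for arbitrary distributions $\nu, \nu'$ over $\Gc$, that proposition delivers
\[
\mathcal{W}_1\myl(\phi_\sharp'(\nu), \phi_\sharp'(\nu')\myr) \leq \operatorname{Lip}(f) \cdot \mathcal{W}_1\myl(\phi_\sharp(\nu), \phi_\sharp(\nu')\myr),
\]
and substituting $\operatorname{Lip}(f) = B/S$ produces the claimed inequality.

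I do not anticipate a genuine obstacle. The one point worth double-checking is that the direction of the factorization is consistent across the two propositions: $\phi$, the more distinguishing ($S$-separating) encoder, must appear as the \emph{inner} map in $\phi' = f \circ \phi$, while $\phi'$, the $B$-bounded encoder, is the outer map. This matches both the factorization produced by Lemma~\ref{lem:factor} (and inherited by Proposition~\ref{prop:specificL}) and the hypothesis format of Proposition~\ref{prop:wasserineq}, so the two results compose without any alteration. A minor sanity check is that when $f$ happens to be the trivial map (e.g., if $\phi$ and $\phi'$ have the same distinguishing power and $B/S$ can be taken as $1$), the bound reduces, as expected, to $\mathcal{W}_1\myl(\phi_\sharp'(\nu), \phi_\sharp'(\nu')\myr) \leq \mathcal{W}_1\myl(\phi_\sharp(\nu), \phi_\sharp(\nu')\myr)$.
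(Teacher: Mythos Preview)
Your proposal is correct and matches the paper's approach exactly: the paper introduces Corollary~\ref{cor:usehomtobound} simply as ``an important corollary of Propositions~\ref{prop:specificL} and~\ref{prop:wasserineq}'' with no further argument, which is precisely the two-step chaining you describe. Your check on the direction of the factorization $\phi' = f\circ\phi$ is also the right thing to verify, and it lines up with Lemma~\ref{lem:factor} and the hypothesis of Proposition~\ref{prop:wasserineq}.
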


As an example, consider a $B$-bounded graph encoder $\phi: \Gc \to \Zc_\phi$ which is bounded in distinguishing power by the $1$-separating encoder $\mathsf{Hom}_\Fb$, for some sequence $\Fb$ of graphs. Then for any distributions $\nu$ and $\nu'$ on $\Gc$, we have
\[
\mathcal{W}_1\myl(\phi_\sharp(\nu), \phi_\sharp(\nu')\myr) \leq (B/1) \cdot \mathcal{W}_1\myl((\mathsf{Hom}_\Fc)_\sharp(\nu),( \mathsf{Hom}_\Fc)_\sharp(\nu')\myr).
\]
More broadly, these results suggest that the variance of embedding distributions in $\Zc_\phi$, produced by a complex graph encoder, can be effectively upper bounded by the variance of simpler, combinatorial graph invariants—such as homomorphism counts, Weisfeiler-Leman tests, and other structural descriptors, provided that the latter bound the former in terms of distinguishing power.

\section{Generalization Analysis}\label{sec:genal}
We use the setup from \citet{Chuang2021-ik} but 
 translated to the graph setting.
More precisely, let \(\Gc\) represent the input space of graphs, \(\Zc\) the embedding space in $\Rb^d$ for some $d\in\Nb$, and \(\Yc = \{1, \dots, K\}\) the output space consisting of \(K\) classes. We define a set of \emph{graph encoders} \(\Phi = \{\phi: \Gc \to \Zc\}\) and a set of \emph{predictors} \(\Psi = \{\psi=(\psi_1,\ldots,\psi_K): \Zc \to \Rb^K\}\). A \emph{score-based graph classifier} $\psi\circ\phi$ simply returns \(\arg\max_{y\in\Yc} \psi_y(\phi(G))\) on input $G$. The graph encoders in $\Phi$ are assumed to be graph invariants, such as, e.g., $\Fb$-$\text{MPNNs}$, $\WLF{\Fb}$, or $\Hom_\Fb$. 

We define the \emph{margin} of a graph classifier \(\psi \circ \phi\) for a graph sample \((G, y) \in \Gc \times \Yc\) as
\begin{equation*}
\label{eqn: margin_loss}
    \rho_\psi(\phi(G), y) := \psi_y(\phi(G)) - \max _{y' \neq y} \psi_{y'}(\phi(G)).
\end{equation*}
The graph classifier \(\psi \circ \phi\) misclassifies \(G\) if \(\rho_\psi(\phi(G), y) < 0\). Let \(\mu\) be a distribution over \(\Gc \times \Yc\), and  \(\Sc = \{(G_i, y_i)\}_{i=1}^m\) be a set of \(m\) graph samples drawn i.i.d. from \(\mu\), i.e., \(\Sc \sim \mu^m\). The \emph{empirical distribution} \(\mu_\Sc\) is defined as $
\mu_\Sc := \frac{1}{m}\sum_{i=1}^m \delta_{(G_i, y_i)}$, where \(\delta_{(G_i, y_i)}\) denotes the Dirac delta measure centered at \((G_i, y_i)\). 
The \emph{expected zero-one loss} $R_\mu(\psi \circ \phi)$ and the \emph{$\gamma$-margin empirical zero-one loss} $\hat{R}_{\gamma, \Sc}(\psi \circ \phi)$ are defined as \begin{equation*}
    R_\mu(\psi \circ \phi):=\mathbb{E}_{(G, y) \sim \mu}\left[\mathbbm{1}_{\rho_\psi(\phi(G), y) \leq 0}\right] \text{ and }\hat{R}_{\gamma, \Sc}(\psi \circ \phi):=\mathbb{E}_{(G, y) \sim \mu_\Sc}\left[\mathbbm{1}_{\rho_\psi(\phi(G), y) \leq \gamma}\right].
\end{equation*} 
We aim to bound the \emph{generalisation gap} $R_\mu(\psi \circ \phi) - \hat{R}_{\gamma, \Sc}(\psi \circ \phi)$ for the graph classifier $\psi \circ \phi$. 

\subsection{Generalization bound}
We are now ready to present the generalization bounds. Our results build on the margin bounds of \citet{Chuang2021-ik}, which are themselves based on a generalized notion of variance that involves the Wasserstein distance \citep{solomonGN22}. This notion more effectively captures the structural properties of the feature distribution. Crucially, we fully exploit the properties of graph encoders and, in particular, use \cref{prop:wasserineq} to derive an upper bound on the generalization gap of any graph encoder $\phi: \Gc \to \Zc_\phi$
in terms of \emph{any} graph encoder bounding $\phi$ in distinguishing power! 
    
In order to formally state our results, some additional definitions are needed. Recall that we consider graph classifiers $(\psi_1,\ldots,\psi_K)\circ\phi$ where $\phi:\Gc\to\Zc_\phi$ is a graph encoder and the predictor $\psi=(\psi_1,\ldots,\psi_K)$ is such that each $\psi_i:\Zc_\phi\to\Rb$. Recall also that the output space $\Yc=\{1,\ldots,K\}$ and that $\mu$ is a distribution over $\Gc\times\Yc$. We denote by $\mu_x$ the marginal distribution on $\Gc$, i.e., $\mu_x(G):=\int \mu(G,y) \text{d}y$ and by $\mu_y$ the  marginal distribution on $\Yc$, i.e, $\mu_y(c):=\int \mu(x,c) \text{d}x$. Then,
for each $c\in \Yc$, $\mu_c(G)$ is the conditional distribution on $\Gc$ defined by $\mu(G,c)/\mu_y(c)$.

\begin{restatable}{theorem}{onewexpectationbound}
\label{lemma:1w_expectation_bound}
Fix \(\gamma > 0\) and a graph encoder \(\phi:\Gc\to\Zc_\phi\). Let $\lambda:\Gc\to\Zc_\lambda$ be a graph encoder that bounds $\phi$ in distinguishing power, i.e., $\lambda\sqsubseteq\phi$. Then, for every distribution $\mu$ on $\Gc\times\Yc$, for every predictor $\psi=(\psi_y)_{i\in\Yc}$ and every $\delta\in(0,1)$, with probability at least $1-\delta$ over all choices of $\Sc\sim \mu^m$, we have that the generalization gap $R_\mu(\psi \circ \phi) - \hat{R}_{\gamma, \Sc}(\psi \circ \phi)$ is upper bounded by
\begin{equation}  \label{eq:main}  
\mathbb{E}_{c \sim \mu_y} \left[ \frac{\operatorname{Lip}\left(\rho_\psi(\cdot, c)\right)\operatorname{Lip}(f)}{\gamma} 
        \mathbb{E}_{T,\tilde{T} \sim \mu_{c}^{m_c}} \left[ 
        \mathcal{W}_1\bigl(\lambda_\sharp(\mu_{c,T}), \lambda_\sharp(\mu_{c,\tilde{T}})\bigr) 
         \right]
    \right]  + \sqrt{\frac{\log (1 / \delta)}{2m}}, \tag{$\dagger$}
\end{equation}
where $\phi=f\circ\lambda$ and for each $c\in\Yc$, $m_c$ denotes the number of pairs $(G,c)$ in $\Sc$.  Also, recall that for $T\sim \mu_c^{m_c}$,
$\mu_{c,T}$ is the empirical distribution $\mu_{c,T}:=\sum_{G\in T} \delta_{G}$; similarly for $\mu_{c,\tilde T}$.
\end{restatable}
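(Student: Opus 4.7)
The plan is to combine the margin-based generalization bound of \citet{Chuang2021-ik} with the Lipschitz transfer of Wasserstein distances established in \cref{prop:wasserineq}. First, I would apply Chuang et al.'s bound directly to the graph classifier $\psi\circ\phi$, treating graphs abstractly as points whose features are their embeddings $\phi(G)\in\Zc_\phi$. This yields, with probability at least $1-\delta$ over $\Sc\sim\mu^m$, the inequality
\[
R_\mu(\psi\circ\phi) - \hat{R}_{\gamma,\Sc}(\psi\circ\phi) \leq \mathbb{E}_{c\sim\mu_y}\!\left[\frac{\operatorname{Lip}(\rho_\psi(\cdot,c))}{\gamma}\mathbb{E}_{T,\tilde T\sim\mu_c^{m_c}}\!\left[\mathcal{W}_1\bigl(\phi_\sharp(\mu_{c,T}),\phi_\sharp(\mu_{c,\tilde T})\bigr)\right]\right]+\sqrt{\frac{\log(1/\delta)}{2m}}.
\]
This step reduces the generalization gap to controlling the expected Wasserstein distance between two class-conditional empirical pushforward distributions under $\phi$.

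Next, since $\lambda\sqsubseteq\phi$, I would invoke \cref{lem:factor} to obtain a function $f:\Zc_\lambda\to\Zc_\phi$ with $\phi=f\circ\lambda$. Applying \cref{prop:wasserineq} to each pair $(\mu_{c,T},\mu_{c,\tilde T})$ of empirical distributions on $\Gc$ then gives
\[
\mathcal{W}_1\bigl(\phi_\sharp(\mu_{c,T}),\phi_\sharp(\mu_{c,\tilde T})\bigr) \leq \operatorname{Lip}(f)\cdot \mathcal{W}_1\bigl(\lambda_\sharp(\mu_{c,T}),\lambda_\sharp(\mu_{c,\tilde T})\bigr).
\]
Substituting this inequality into the previous display, and absorbing the constant $\operatorname{Lip}(f)$ into the outer expectation (where it is deterministic in $c$), directly produces the claimed bound~\eqref{eq:main}.

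The main obstacle is the first step: one must verify that the margin bound of \citet{Chuang2021-ik}, originally stated for feature maps into a Euclidean space, transfers cleanly to our setting, where $\phi$ maps into an abstract metric embedding space $\Zc_\phi$ and the underlying input space $\Gc$ is a set of graphs rather than vectors. This adaptation is essentially cosmetic: their proof only invokes Kantorovich--Rubinstein duality (requiring merely a metric on $\Zc_\phi$), Lipschitzness of the margin $\rho_\psi(\cdot,c)$ with respect to that metric, and a McDiarmid/Hoeffding-type concentration argument over the i.i.d.\ sample $\Sc$ for the $\sqrt{\log(1/\delta)/(2m)}$ term — none of which uses vector-space structure on the inputs. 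A secondary subtlety is that $\operatorname{Lip}(f)$ may be infinite when $\phi$ is not bounded or $\lambda$ not separating, in which case the bound is vacuous but still formally valid; \cref{prop:specificL} ensures the finite regime $\operatorname{Lip}(f)\leq B/S$ whenever $\phi$ is $B$-bounded and $\lambda$ is $S$-separating, which is the practically relevant setting for the case studies that follow.
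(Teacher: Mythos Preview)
Your proposal is correct and follows essentially the same approach as the paper: invoke Theorem~2 of \citet{Chuang2021-ik} with $\Gc$ in place of the abstract input space to obtain the bound in terms of $\phi_\sharp$, then use \cref{lem:factor} to factor $\phi=f\circ\lambda$ and \cref{prop:wasserineq} to replace each $\mathcal{W}_1(\phi_\sharp(\mu_{c,T}),\phi_\sharp(\mu_{c,\tilde T}))$ by $\operatorname{Lip}(f)\cdot\mathcal{W}_1(\lambda_\sharp(\mu_{c,T}),\lambda_\sharp(\mu_{c,\tilde T}))$. Your additional remarks on the cosmetic adaptation of Chuang et al.'s argument to metric embedding spaces and on the vacuous-but-valid case $\operatorname{Lip}(f)=\infty$ are accurate and go slightly beyond what the paper spells out.
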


The proof is a consequence of Theorem 2 in \citet{Chuang2021-ik} and \cref{prop:wasserineq}. As also observed by those authors, the expectation term over \(T,\tilde{T} \sim \mu_c^{m_c}\) is intractable in general. To address this drawback, \citet{Chuang2021-ik} show how to estimate the expectation by means of \emph{sampling}, provided that encoders are $B$-bounded. A similar approach works in our case as well. We refer to \cref{app:secfive} for proofs and details.

 \cref{lemma:1w_expectation_bound}  highlights several key factors that influence the generalization of graph classifiers: (i) the learning behavior of the predictors $\psi$, captured by \(\operatorname{Lip}\left(\rho_\psi(\cdot, c)\right)\); (ii) the learning behavior of graph encoder $\phi$, relative to $\lambda$, described by \(\operatorname{Lip}(f)\); and (iii)  the variance of graph structures, in the Wasserstein distance \(\mathcal{W}_1\bigl(\lambda_\sharp(\mu_{c,T}), \lambda_\sharp(\mu_{c,\tilde{T}})\bigr)\) for graph samples $T,\tilde T\sim\mu_c^{m_c}$.

\subsection{Concentration and separation}
In terms of concentration, since \(\mathbb{E}_{T,\tilde{T} \sim \mu_c^{m_c}}[W_1(\fa_\sharp(\mu_{c,T}), \fa_\sharp(\mu_{c,\tilde{T}}))] \leq O(m^{-1/d})\) \citep{Chuang2021-ik}, a large sample size \(m\) and a small dimension \(d\) of the embedding space \(\Zc_\lambda\) lead to a smaller generalization bound. For instance, when \(\mu\) is concentrated on graphs in \(\Gc\) with low \emph{color complexity} \citep{morris23meet}—i.e., the \(\WL\) test requires only a small number of colors for the graph's vertices—combinatorial graph encoders like \(\Hom_\Fb\) and \(\WLF{\Fb}(L)\) can operate in low-dimensional spaces. This observation is consistent with earlier findings \citep{KieferM20, garg2020radmacher, liao21pacbayes, ju2023diffusion, CongRM21, esser2021sometimes, morris23meet} about the effect of graph size, degree, and maximum degree on generalization performance.

Of particular interest is the case when the bounding graph classifier $\lambda$ is assumed to have a large margin. A larger margin is generally associated with better generalization~\citep{elsayed2018large,Chuang2021-ik}. If we assume the margin \(\gamma\) is satisfied for $\psi\circ\lambda$, for all graph samples, and for each $c\in\Yc$, the predictor \(\psi_c \in \psi\) is Lipschitz, then (see Lemma 10 in \citet{Chuang2021-ik}) we have 
\[
\gamma\leq \bigl(\max_{\substack{c,c'\in\Yc\\c\neq c'}}W_1(\lambda_\sharp(\mu_c), \lambda_\sharp(\mu_{c'}))\bigr)\bigl(\max_{c\in\Yc}\Lip(\psi_c)\bigr).
\]
which allows us to lower bound $1/\gamma$ in \cref{eq:main}, resulting in \cref{pro:lowerbound_lip}, hereby revealing a trade-off between concentration and separation.

\begin{restatable}{proposition}{lowerboundlip}
\label{pro:lowerbound_lip}
Under the same assumptions as in \cref{lemma:1w_expectation_bound}, but with the additional requirement that the predictors $\psi_c$ in $\psi$ are Lipschitz, and that the bounding graph classifier $\lambda$ has a large margin, i.e., \(\rho_{\psi}(\fa(G), y) \geq \gamma\) for all $(G,y)\sim\mu$, then for any $\delta\in(0,1)$, with probability at least $1-\delta$ over all choices $\Sc\sim\mu^m$, we have that the gap $R_\mu(\psi \circ \phi) - \hat{R}_{\gamma, \Sc}(\psi \circ \phi)$ is upper bounded by

\begin{equation*}
\frac{\Lip(f)\cdot \mathbb{E}_{c\sim \mu_y}
    \left[\Lip(\rho_\psi(\cdot, c))
         \mathbb{E}_{T,\tilde{T}\sim \mu_{c}^{m_c}}
      \left[ 
      \W_1\left(\fa_\sharp(\mu_{c,T}), \fa_\sharp(\mu_{c,\tilde{T}})\right) 
      \right]
    \right]}{\left(\max_{c\in\Yc} \Lip(\psi_c)\right)\left(\max_{c,c'\in\Yc, c\neq c'} W_1\left(\fa_\sharp(\mu_c), \fa_\sharp(\mu_{c'})\right)\right)}
     + \sqrt{\frac{\log (1 / \delta)}{2 m}}.
\end{equation*}

Furthermore, this bound is no worse than the one given in \cref{lemma:1w_expectation_bound}.
\end{restatable}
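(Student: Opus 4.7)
The strategy is to combine \cref{lemma:1w_expectation_bound} with a Wasserstein separation inequality that transforms the large-margin hypothesis into a quantitative bound on the inter-class separation in the $\fa$-embedding space, thereby rewriting the $1/\gamma$ factor in terms of Lipschitz constants and inter-class Wasserstein distances. First, I would invoke \cref{lemma:1w_expectation_bound} with the same data, encoders $\phi$ and $\fa$, predictor $\psi$, distribution $\mu$, and margin $\gamma$. This yields, with probability at least $1-\delta$, that the gap $R_\mu(\psi\circ\phi)-\hat R_{\gamma,\Sc}(\psi\circ\phi)$ is upper bounded by the expression in \cref{eq:main}, whose only explicit margin-dependence is the factor $1/\gamma$.

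Next, I would establish the inter-class separation inequality (the graph-setting analogue of Lemma~10 in \citet{Chuang2021-ik}):
\[
\gamma \;\leq\; \Bigl(\max_{c\in\Yc}\Lip(\psi_c)\Bigr)\Bigl(\max_{\substack{c,c'\in\Yc\\ c\neq c'}} \W_1\bigl(\fa_\sharp(\mu_c),\fa_\sharp(\mu_{c'})\bigr)\Bigr).
\]
The proof uses Kantorovich--Rubinstein duality: for any distinct $c,c'\in\Yc$ and any $1$-Lipschitz $h:\Zc_\fa\to\Rb$, we have $|\Eb_{\mu_c}[h\circ\fa]-\Eb_{\mu_{c'}}[h\circ\fa]| \leq \W_1(\fa_\sharp(\mu_c),\fa_\sharp(\mu_{c'}))$. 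Choosing $h=\psi_{c'}/\Lip(\psi_{c'})$ and summing the two-sided margin conditions $\psi_c\circ\fa-\psi_{c'}\circ\fa\geq\gamma$ on class $c$ with $\psi_{c'}\circ\fa-\psi_c\circ\fa\geq\gamma$ on class $c'$ forces the two one-sided expectation gaps to sum to at least $2\gamma$, so at least one of them is $\geq\gamma$; dividing by $\Lip(\psi_{c'})$ and taking maxima over class pairs yields the displayed inequality.

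Finally, lower-bounding $1/\gamma$ in \cref{eq:main} via this inequality produces exactly the bound stated in \cref{pro:lowerbound_lip}, and the ``no worse than'' comparison is then immediate because replacing $1/\gamma$ by the smaller quantity $1/[(\max_c\Lip(\psi_c))(\max_{c\neq c'}\W_1(\fa_\sharp(\mu_c),\fa_\sharp(\mu_{c'})))]$ can only shrink (or leave unchanged) the expression from \cref{lemma:1w_expectation_bound}.

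The main obstacle I anticipate is the pigeonhole step in the separation inequality. In the multiclass setting with predictors of possibly distinct Lipschitz constants, one must, for each pair $c\neq c'$, identify the correct normalised $1$-Lipschitz test ($\psi_{c'}/\Lip(\psi_{c'})$), show that the sum of directional margin inequalities genuinely produces a one-sided expectation gap of at least $\gamma$ for some class pair, and then lift this to a global bound in terms of $\max_c\Lip(\psi_c)$ and $\max_{c\neq c'}\W_1(\fa_\sharp(\mu_c),\fa_\sharp(\mu_{c'}))$ without losing constants. A secondary subtlety is checking that the $\gamma$-margin assumption on the bounding encoder $\fa$ transfers correctly through the factorisation $\phi=f\circ\fa$, so that the separation inequality---which lives in $\Zc_\fa$---can be legitimately used to reparameterise the $1/\gamma$ factor in a bound whose underlying classifier is $\psi\circ\phi$.
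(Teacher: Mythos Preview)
Your proposal is correct and follows essentially the same route as the paper: invoke \cref{lemma:1w_expectation_bound}, establish the separation inequality $\gamma \leq (\max_c \Lip(\psi_c))(\max_{c\neq c'} \W_1(\fa_\sharp(\mu_c),\fa_\sharp(\mu_{c'})))$ (which the paper simply attributes to Lemma~10 in \citet{Chuang2021-ik}), and then replace $1/\gamma$ in \cref{eq:main} by the resulting lower bound. Your sketch of the separation inequality via Kantorovich--Rubinstein duality and the pigeonhole over the two directional margin gaps is in fact more detailed than what the paper provides, and the ``secondary subtlety'' you flag about the margin living on $\fa$ rather than $\phi$ is a non-issue precisely for the reason implicit in your outline: $\gamma$ enters \cref{eq:main} only as a free parameter, so the separation inequality can be used directly to reparameterise it.
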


The above proposition highlights that, to achieve a low generalization bound, it is crucial to ensure good concentration between embeddings of the same class, i.e., \(W_1(\fa_\sharp(\mu_{c,T}), \fa_\sharp(\mu_{c,\tilde{T}}))\), while maintaining a large separation between embeddings of different classes, i.e., \(W_1(\fa_\sharp(\mu_c), \fa_\sharp(\mu_{c'}))\), where \(c \neq c'\). This can be achieved when $\fa$ learn embeddings in the ``right'' directions, where embeddings of different classes are ``more separated'' than those of the same class, or when the distribution $\mu$ is concentrated on graphs for which this separation happens for $\lambda$.

\paragraph{Remarks.}
In our bounds, we identify three Lipschitz constants: $\Lip(\rho_\psi(\cdot, c))$, $\Lip(\psi_c)$, and $\Lip(f)$.
First, note that $\rho_\psi(\cdot, c)$ depends on $\psi_c$, and therefore it is Lipschitz in its first argument if $\psi_c$ is Lipschitz. For simplicity, we assume that the predictor $\psi = (\psi_c)_{c \in \Yc}$ is a softmax function with Lipschitz constant $1$.
For general $\psi_c$, $\Lip(\rho_\psi(\cdot, c))$ can be approximated empirically using the Jacobian, as suggested by \citep{Chuang2021-ik}.

 Furthermore, \cref{cor:usehomtobound}  states that the connecting function $f$ between the graph encoders $\lambda \sqsubseteq \phi$ is Lipschitz with constant $B/S$, provided that $\phi$ is $B$-bounded and $\lambda$ is separating. Therefore, when $\phi$ is $B$-bounded, $\Lip(f)$ decreases as $S$ increases. We also note that $S$ can increase with added expressivity in $\lambda$, which enhances its separation ability. In practice, both $B$ and $S$ can be computed empirically. We discuss the effect of added expressivity in $\lambda$ in more detail in the next section. 

  \begin{table}[t!]
     \begin{center}
      \resizebox{\textwidth}{!}{
     \begin{tabular}{c c |c  c c}
     \toprule
       \multicolumn{2} {c|}{\hspace{1cm}Initial Vertex Colors}  & After One Iteration\hspace{0.5cm} & Graph Embeddings&Wasserstein\\
       & \hspace{-0.4cm}{\small $G$\hspace{1.5cm}$G'$}&\hspace{-0.4cm}{\small $G$\hspace{1.5cm}$G'$}& (Difference)&Distance\\
       \midrule
        \shortstack{(a)\\$\WL$\\\vspace{0.8cm}} &\includegraphics[width=0.25\textwidth]{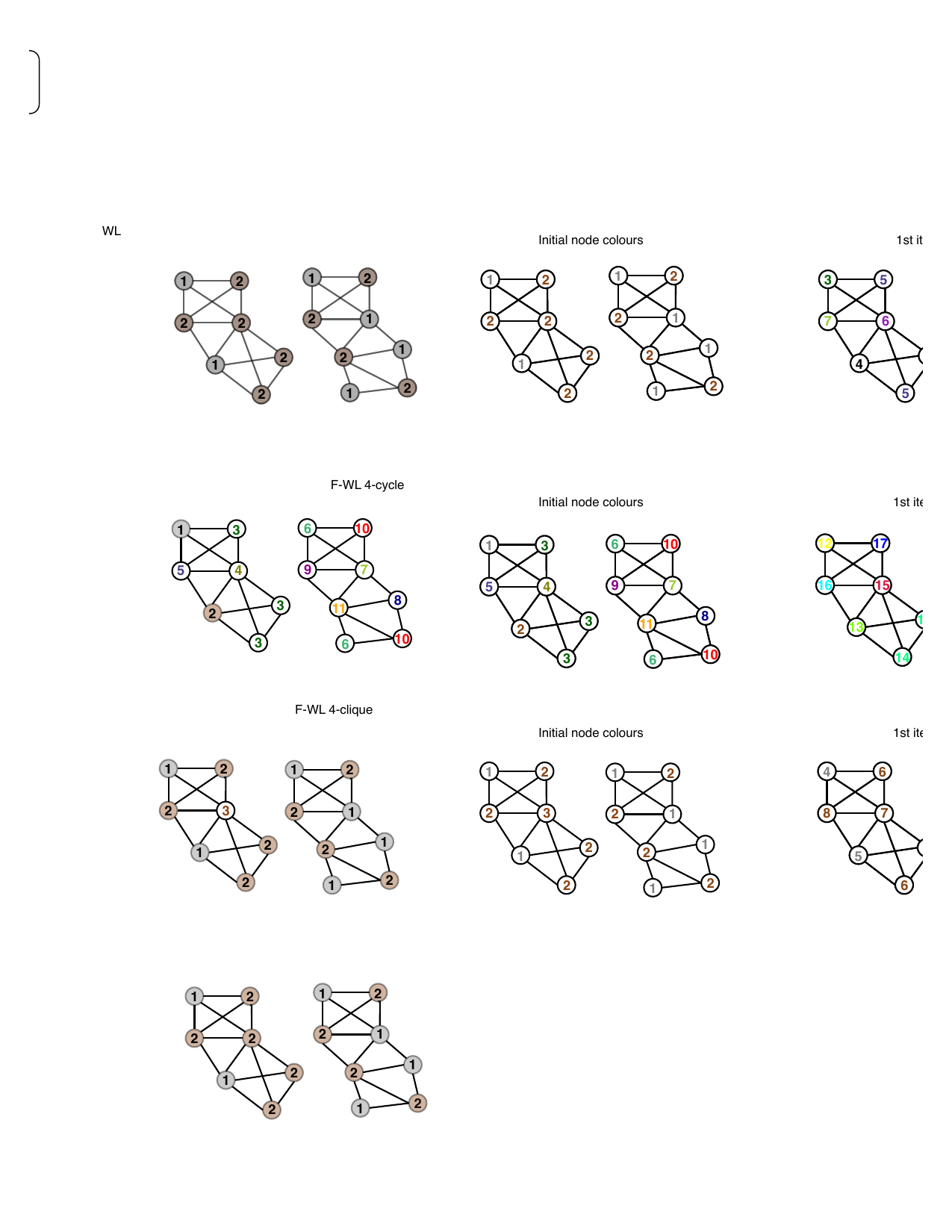}
        & 
        \includegraphics[width=0.25\textwidth]{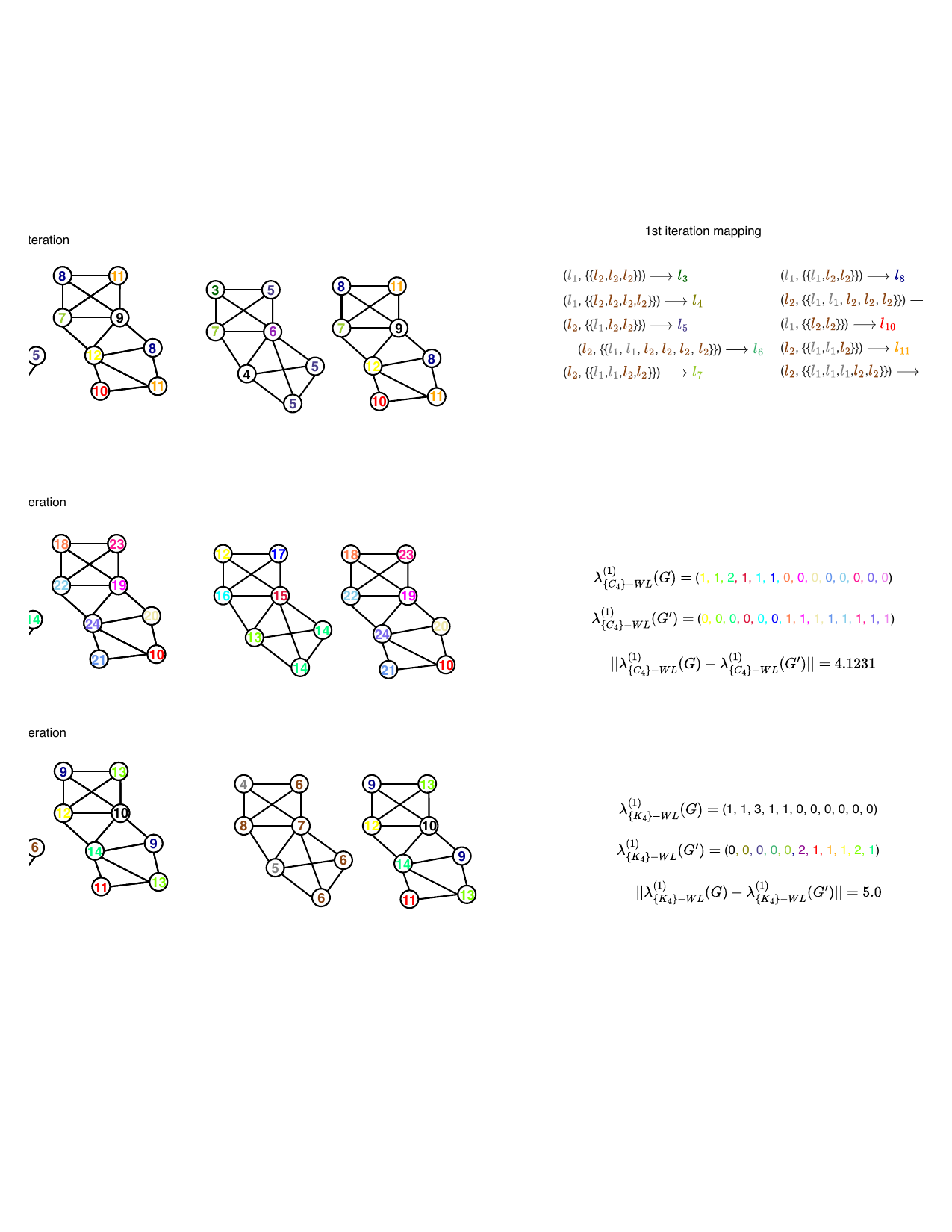}
        & 
        \shortstack{\includegraphics[width=2.8cm]{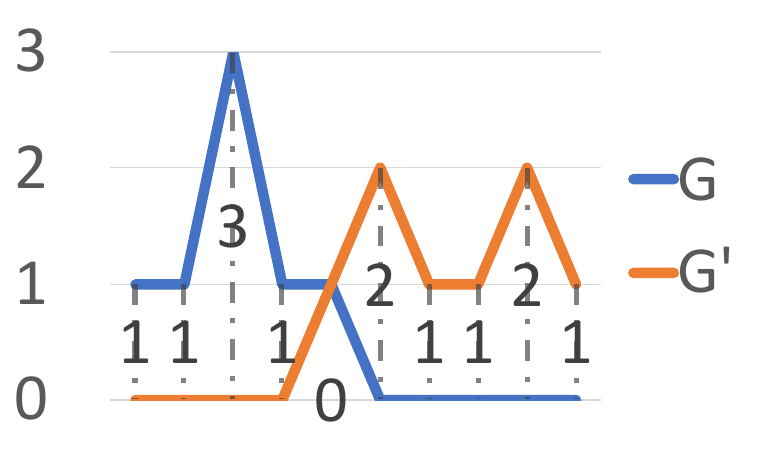}
 }
 


        &\shortstack{4.796\\\vspace{0.8cm}}\\ 
        \hline
        \shortstack{(b)\\$\WLF{C_4}$\\\vspace{0.8cm}}  &\includegraphics[width=0.25\textwidth]{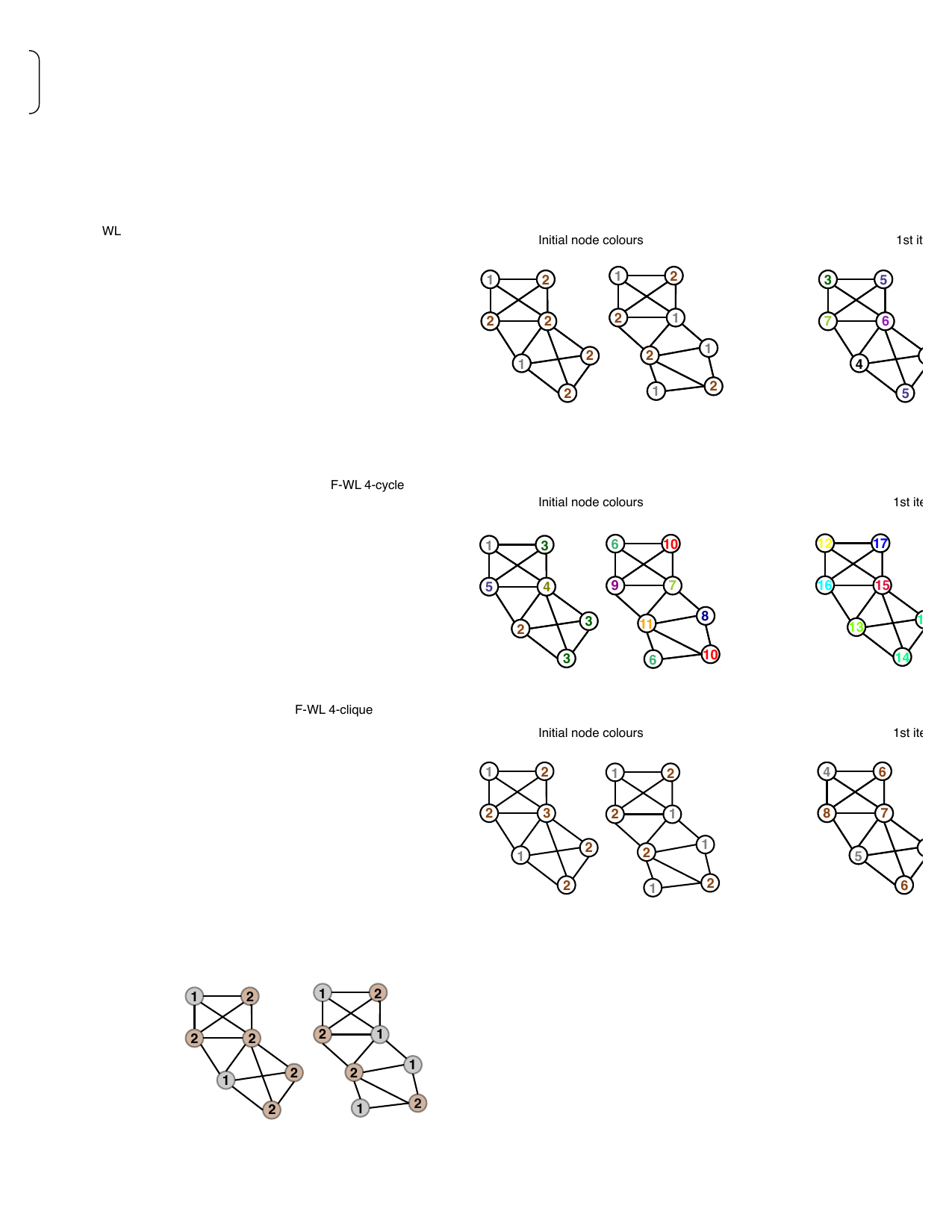}
        & 
        \includegraphics[width=0.25\textwidth]{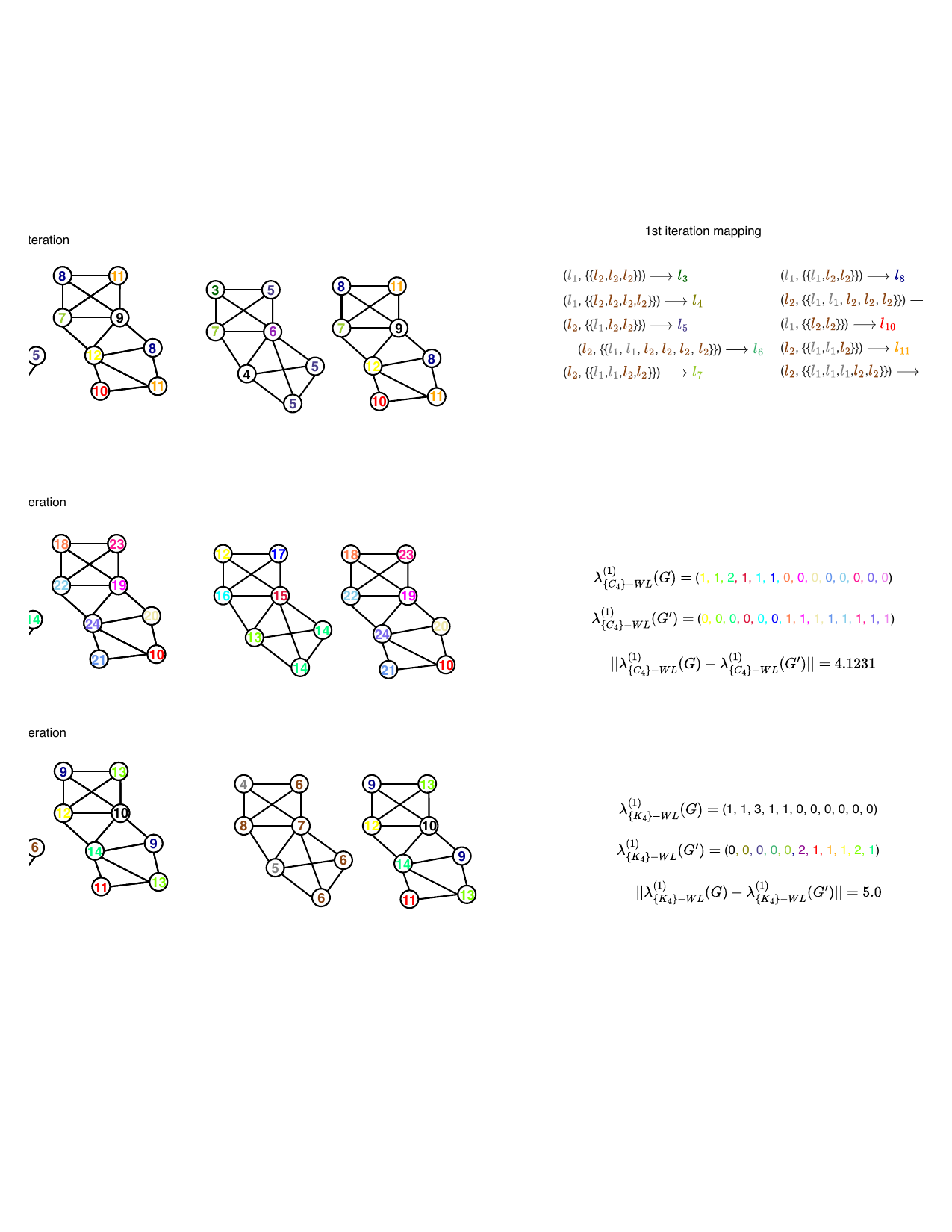}
        & 
        \shortstack{\includegraphics[width=2.8cm]{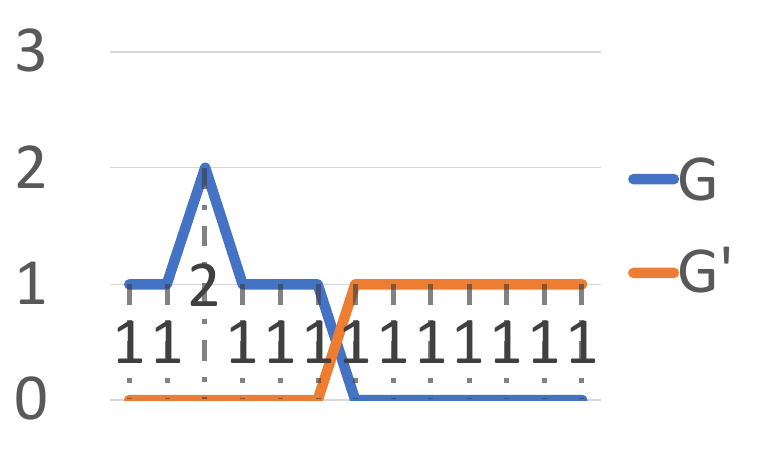}
        }

 
 & \shortstack{4.123\\\vspace{0.8cm}}\\  \hline
\shortstack{(c)\\$\WLF{K_4}$\\\vspace{0.8cm}}  &\includegraphics[width=0.25\textwidth]{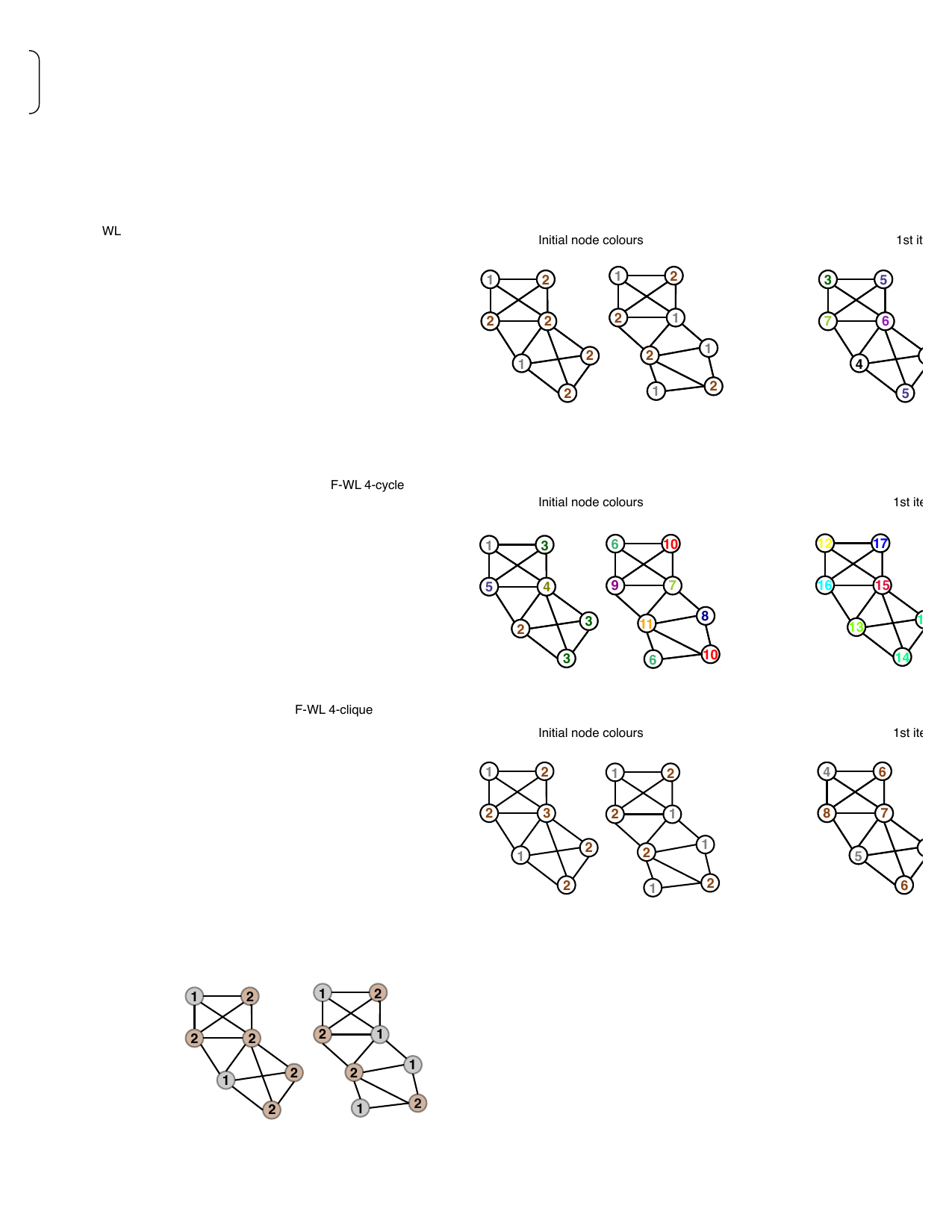}
      & 
\includegraphics[width=0.25\textwidth]{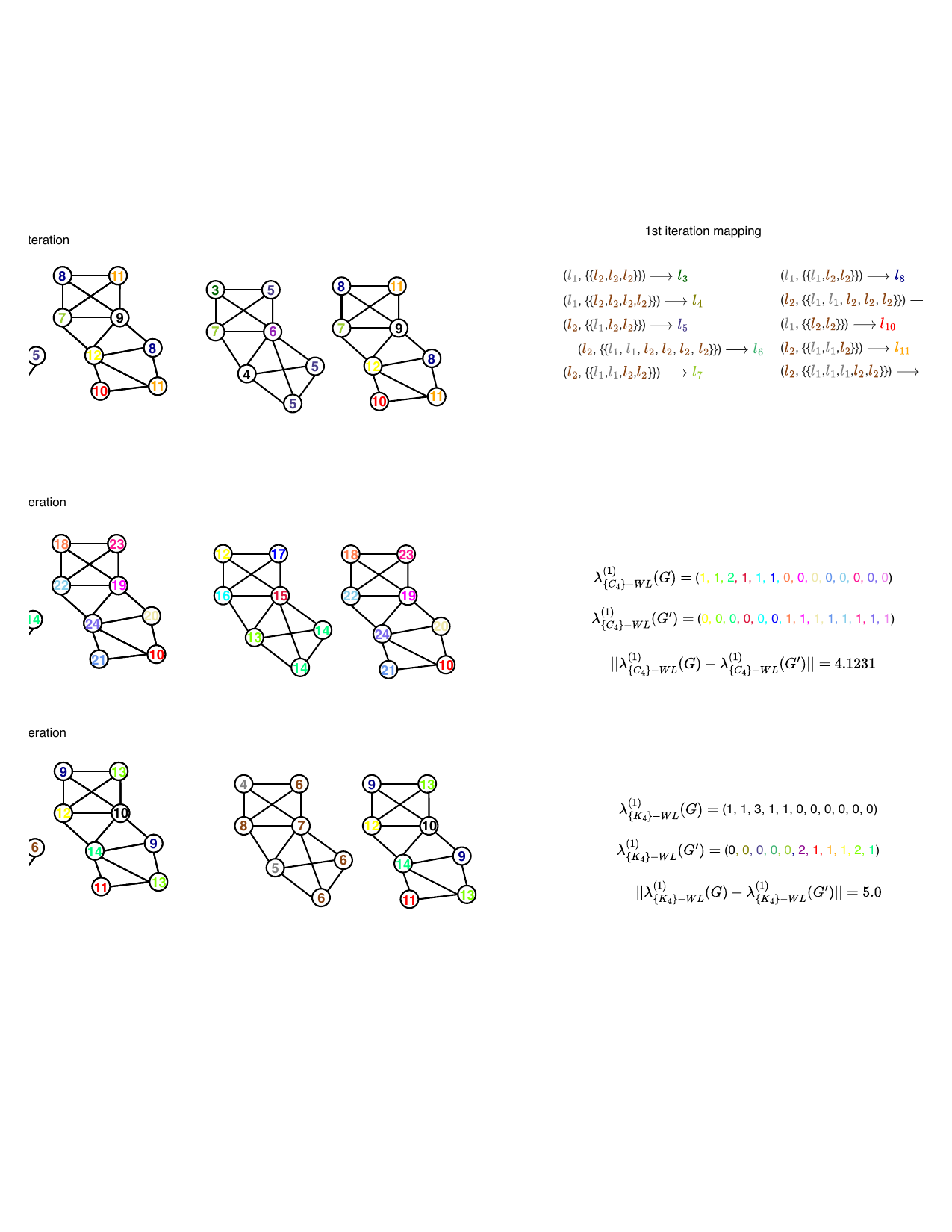}
      & 
 \shortstack{\includegraphics[width=2.8cm]{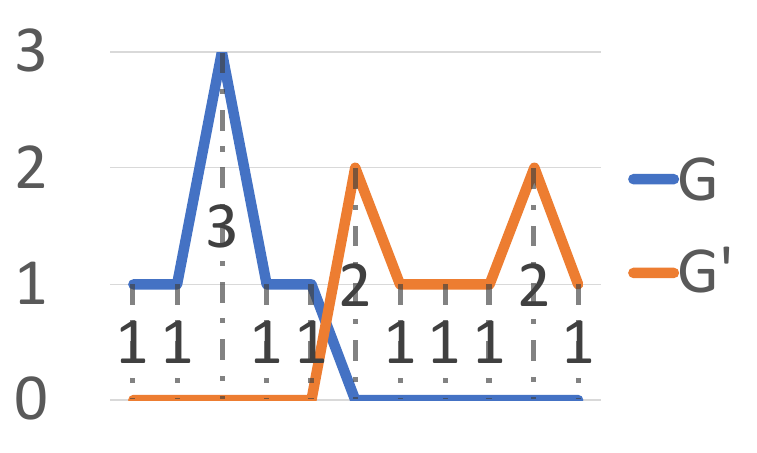}
 }
     & \shortstack{5.000\\\vspace{0.8cm}}\\ 
      \bottomrule
      \end{tabular}
      }
\caption{Two graphs with the initial vertex colors, including the input vertex features and homomorphism counts, the vertex colors after one iteration, and the dimension-wise difference and Wasserstein distance between graph embeddings for three models: (a) $\WL$, (b) $\WLF{C_4}$, and (c) $\WLF{K_4}$. } 
\label{fig:hom_emb_C4}
      \label{tbl:case-studies}
      \end{center}
      \end{table}

\section{Case Studies}
In this section, we present case studies to illustrate how model expressivity influences the generalization ability of graph encoders. Recall that two key factors are considered in \cref{lemma:1w_expectation_bound} and \cref{pro:lowerbound_lip}: (i) \emph{intra-class concentration}, which reflects the variance of graph structures, measured by the Wasserstein distance \(\mathcal{W}_1(\lambda_\sharp(\mu_{c,T}), \lambda_\sharp(\mu_{c,\tilde{T}}))\) for graph samples \(T, \tilde{T} \sim \mu_c^{m_c}\), and (ii) \emph{inter-class separation}, measured by the Wasserstein distance \(\max_{c, c' \in \Yc, c \neq c'} \mathcal{W}_1(\fa_\sharp(\mu_c), \fa_\sharp(\mu_{c'}))\). For simplicity, we consider the following graphs \(\{G, G', H, H'\}\) from the PROTEINS dataset~\citep{Morris+2020}, uniformly selected by the distribution $\mu$,
\[
G=\graphGM\hspace{1cm}G'=\graphGGM \hspace{1cm} H=\graphHM\hspace{1cm} H'=\graphhM
\]
Here, \(G\) and \(G'\) belong to the class \(c\), while \(H\) and \(H'\) belong to the class \(c'\), where \(c \neq c'\). Assuming the margin condition is satisfied for all classes, including \(c\) and \(c'\), and that embeddings of graphs within each class cluster around the embeddings of \(G\), \(G'\), and \(H\), \(H'\), we estimate \(\mathcal{W}_1\myl(\lambda_\sharp(\mu_{c,T}), \lambda_\sharp(\mu_{c,\tilde{T}})\myr)\) using \(\mathcal{W}_1\myl(\lambda_\sharp(G), \lambda_\sharp(G')\myr)\). Since there are only two classes in this dataset, we estimate \(\max_{c, c' \in \Yc, c \neq c'} \mathcal{W}_1\myl(\fa_\sharp(\mu_c), \fa_\sharp(\mu_{c'})\myr)\) by \(\mathcal{W}_1\myl(\fa_\sharp(\mu_c), \fa_\sharp(\mu_{c'})\myr)\) where $\mu_c=\{G,G'\}$ and $\mu_{c'}=\{H,H'\}$.

For the bounding graph encoders \(\lambda\), we use $\WL$ as the base model. Then, following the approach by \cite{Barcelo2021-rs} 
we consider two simple rooted graphs $\graphC$ and $\graphK$ and append their homomorphism counts $\Hom(\graphC,\cdot)$ and $\Hom(\graphK,\cdot)$ to the vertex feature of each rooted pair \((\cdot, v)\) in the graphs $G$, $G'$, $H$ and $H'$, respectively. This leads to slight increase in model expressivity, compared with $\WL$, allowing us to analyze how these differences impact key factors in generalization. We refer to these two graph encoders as $\WLF{C_4}$, where homomorphism counts of $\graphC$ are added, and $\WLF{K_4}$, where homomorphism counts of $\graphK$ are added, respectively.

\cref{tbl:case-studies} presents the graphs \(G\) and \(G'\) along with their initial vertex colors, including input vertex features and homomorphism counts, as well as the vertex colors after one iteration. It also includes the difference between the graph embeddings for the three models. 
As model expressivity increases, we observe two distinct scenarios:
\begin{itemize}
    \item[(1)] \emph{More expressivity leads to better generalization}: 
Compared to the graph embeddings from $\WL$, incorporating $C_4$ improves intra-class concentration. The homomorphism counts of $C_4$ reduce the variance between graph embeddings as shown in \cref{tbl:case-studies}, resulting in a distance of 4.123, smaller than 4.796 for $\WL$.
     \item[(2)] \emph{More expressivity leads to worse generalization}: When $K_4$ is used, the graph embeddings of $G$ and $G'$ yield a distance of 5.000, which is larger than its $\WL$ counterpart 4.796. Compared to $\WLF{C_4}$, each dimension of the $\WLF{K_4}$ embeddings has the same or larger magnitude,  reflecting higher variance in the graph embeddings.
\end{itemize}
When measuring inter-class separation using \(\mathcal{W}_1(\fa_\sharp(\mu_c), \fa_\sharp(\mu_{c'}))\), the models $\WL$, $\WLF{C_4}$, and $\WLF{K_4}$ achieve distances of 4.582, 4.511, and 4.840, respectively. These results suggest a narrowing in the gaps of these models, compared to intra-class concentration alone. The trends in inter-class separation may change depending on the graph structure. 
For instance, if graphs of class \(c'\) cluster around the embedding of \(H'\), i.e., estimating \(\mathcal{W}_1\myl(\fa_\sharp(\mu_c), \fa_\sharp(\mu_{c'})\myr)\) with $\mu_c=\{G,G'\}$ and $\mu_{c'}=\{H'\}$, the reverse trend may occur, with \(\WL\) achieving a distance of 4.796 and \(\WLF{K_4}\) achieving 4.583. This highlights the importance of inter-class separation in balancing a model's generalization performance alongside intra-class concentration.

\section{Experiments}
\label{sec:empirical}

\textbf{Tasks and Datasets $\quad$}
We conduct graph classification experiments on six widely used benchmark datasets: ENZYMES, PROTEINS, and MUTAG from the TU dataset collection~\citep{Morris+2020}, as well as SIDER and BACE from the molecular dataset collection~\citep{WuRFGGPLP17}. For SIDER, which comprises 27 classification tasks, we focus specifically on the 21st task. Each dataset is randomly divided into training and test sets following a 90\%/10\% split.

\textbf{Setup and Configuration $\quad$}
Each classification task is trained for 400 epochs, with five independent runs to report the mean and standard deviation of the results. Consistent with the setup in~\citet{tang2023towards,morris23meet,CongRM21}, we eliminate the use of regularization techniques such as dropout and weight decay. A batch size of 128 is utilized, with a learning rate set to $10^{-3}$, and the hidden layer dimension fixed at 64. The margin loss function is employed with a margin parameter $\gamma = 1$. To compute the generalization gap, we utilize the sample-based variant of the bound as outlined in \cref{lemma:1w_expectation_bound}, as given in \cref{lemma:1w_sample_bound} of the appendix.

For the graph encoder $\phi$, we adopt both MPNNs and $\Fc$-MPNNs, with expressivity constraints defined by $\WL$ and $\WLF{\Fb}$, respectively, as described in \cref{sec:graphenc}. The predictor $\psi(\cdot)$ is modeled using the softmax function, which has a Lipschitz constant of 1~\citep{gao_softmax}, ensuring that $\Lip(\rho_{\psi}(\cdot, c))$ is also 1. 
We estimate $\Lip(f)$ as: $
\Lip(f) = \max_{G,H \in \Gc_{\text{train}}} \left( \frac{d_{\Zc_\phi}(\phi(G), \phi(H))}{d_{\Zc_\fa}(\fa(G), \fa(H))} \right)$,
where $G$ and $H$ are sampled from the training set $\Gc_{\text{train}}$. For all experiments, we set the confidence level $\delta$ to 0.1, yielding bounds with high probability. 
The experiments are conducted on a Linux-based system equipped with 96 GB of memory and a single NVIDIA RTX A6000 GPU.

\begin{table}[h]
    \caption{Graph classification gaps with different numbers of MPNN layers. The MPNN embeddings are not normalized.}
    \label{tab:graph_layer_gap}
    \centering
    \resizebox{0.8\textwidth}{!}{
    \begin{tabular}{clrrrrr}
    \toprule
    \multirow{2}{*}{} &  & \multicolumn{4}{c}{Dataset} \\ \cmidrule{3-7}
     \# Layers&  & ENZYMES & PROTEINS & MUTAG & SIDER & BACE \\
     \midrule
     \multirow{4}{*}{1} & Acc. gap & 25.41\sd{3.82}&2.53\sd{1.76}&-4.84\sd{2.52}&4.21\sd{0.71}&3.63\sd{1.63} \\
     & Loss gap & 0.248\sd{0.040}&0.029\sd{0.015}&-0.070\sd{0.017}&0.037\sd{0.003}&0.018\sd{0.017} \\
     & Our Bound & 7.926\sd{1.279}&2.193\sd{0.702}&1.216\sd{0.169}&0.511\sd{0.286}&1.479\sd{0.301} \\
     & VC Bound & 586 & 929 & 51 & 960 & 621\\ \midrule
     \multirow{4}{*}{2} & Acc. gap & 24.26\sd{2.92}&3.62\sd{1.06}&-4.84\sd{1.78}&4.77\sd{1.12}&4.33\sd{1.29} \\
     & Loss gap & 0.242\sd{0.026}&0.032\sd{0.010}&-0.074\sd{0.007}&0.038\sd{0.003}&0.037\sd{0.019} \\
     & Our Bound & 7.425\sd{0.982}&1.404\sd{0.144}&1.247\sd{0.155}&0.620\sd{0.463}&1.729\sd{0.251} \\
     & VC Bound & 595 & 996 & 121 & 1300 & 1060\\ \midrule
     \multirow{4}{*}{3} & Acc. gap & 29.89\sd{3.01}&2.95\sd{1.47}&-6.60\sd{2.57}&4.03\sd{0.19}&4.99\sd{1.22} \\
     & Loss gap & 0.237\sd{0.035}&0.025\sd{0.009}&-0.058\sd{0.012}&0.038\sd{0.002}&0.032\sd{0.011} \\
     & Our Bound & 6.513\sd{0.951}&1.421\sd{0.220}&1.649\sd{0.158}&0.409\sd{0.253}&1.789\sd{0.226}\\
     & VC Bound & 595 & 996 & 135 & 1309  & 1089\\ \midrule
     \multirow{4}{*}{4} & Acc. gap & 27.04\sd{4.69}&2.86\sd{0.86}&-6.71\sd{2.00}&4.13\sd{0.03}&3.13\sd{2.04} \\
     & Loss gap & 0.235\sd{0.038}&0.027\sd{0.005}&-0.073\sd{0.009}&0.036\sd{0.001}&0.022\sd{0.030} \\
     & Our Bound & 6.825\sd{0.796}&1.434\sd{0.297}&1.535\sd{0.115}&0.298\sd{0.080}&1.686\sd{0.377} \\
     & VC Bound & 595 & 996 & 139 & 1309 & 1093\\ \midrule
     \multirow{4}{*}{5} & Acc. gap & 29.85\sd{4.95}&0.79\sd{0.78}&-4.01\sd{1.14}&4.09\sd{0.06}&3.53\sd{1.41} \\
     & Loss gap & 0.256\sd{0.037}&0.020\sd{0.007}&-0.071\sd{0.021}&0.035\sd{0.001}&0.020\sd{0.020} \\
     & Our Bound & 6.384\sd{0.813}&1.308\sd{0.165}&1.773\sd{0.194}&0.369\sd{0.172}&1.662\sd{0.120} \\
     & VC Bound & 595 & 996 & 139 & 1309 & 1093\\ \midrule
     \multirow{4}{*}{6} & Acc. gap & 28.85\sd{6.01}&1.42\sd{1.40}&-4.02\sd{3.68}&3.60\sd{0.48}&2.44\sd{1.40} \\
     & Loss gap & 0.264\sd{0.025}&0.030\sd{0.008}&-0.078\sd{0.019}&0.034\sd{0.002}&0.022\sd{0.016} \\
     & Our Bound & 6.151\sd{0.798}&1.340\sd{0.316}&1.627\sd{0.038}&0.353\sd{0.156}&1.785\sd{0.237} \\
     & VC Bound & 595 & 996 & 139 & 1309 & 1093\\ 
     \bottomrule
    \end{tabular}%
    }
\vspace{-1em}
\end{table}

\subsection{Results and discussion}

\textbf{How well can the bounds predict the generalization ability of MPNNs?} To answer this, we compare the proposed bound with empirical generalization gaps, measured by loss and accuracy, while controlling MPNN expressivity by varying the number of layers.
\cref{tab:graph_layer_gap} presents the proposed bound and the empirical generalization gaps for different numbers of MPNN layers across five datasets. For comparison, we also include the VC bound from \cite{morris23meet}, which is based on the number of unique color histograms produced by $\WL$, indicating the number of graphs distinguishable by $\WL$.
Our results show that the proposed bounds closely mirror the empirical generalization gaps across datasets and varying layer depths, effectively predicting generalization errors. This consistency highlights the bound’s ability to reflect changes in generalization performance as model depth increases. 
In contrast, the VC bound stabilizes after three iterations for most datasets, as $\WL$ can no longer distinguish more graphs. As a result, the VC bound fails to capture improved generalization performance at deeper layers and remains higher than our bound.
Our bound is less vacuous compared to the VC bound and other bounds, such as those proposed by \cite{garg2020radmacher, liao21pacbayes}, which tend to be on the order of $10^4$.  

To evaluate how well the proposed bound predicts the generalization gap of $\Fc$-MPNNs across different homomorphism pattern selections, we present the empirical loss gap and generalization bound for three distinct pattern sets, alongside MPNN, as shown in \cref{fig:patterns_gap}. We designate $P_n$, $K_n$, and $C_n$ as $n$-path, $n$-clique, and $n$-cycle graphs, respectively, and refer to the MPNN without any specific pattern as ``no pattern". It can be seen that the generalization bound closely aligns with the empirical gap across different pattern choices, with some exceptions in ENZYMES. Notably, the choice of pattern influences the generalization gap in different ways. In ENZYMES, cycle patterns lead to a larger gap compared to cliques and paths. In PROTEINS, using paths or cliques increases the generalization gap, while cycles reduce it. These changes in the empirical generalization gap are largely captured by the corresponding bounds.

\begin{figure}[t!]
\centering
\subfigure[PROTEINS, 4 layers]{\includegraphics[width=0.245\textwidth]{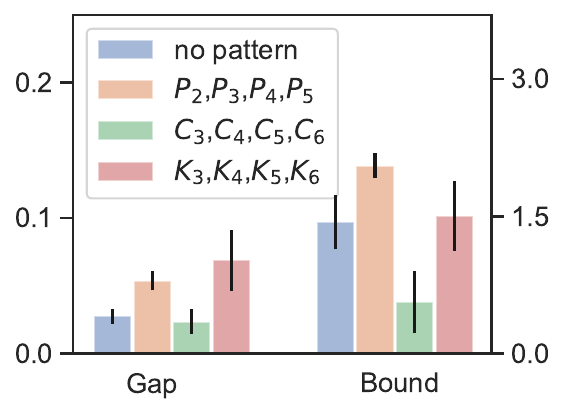}\label{fig:protein-4}}
\subfigure[PROTEINS, 6 layers]{\includegraphics[width=0.245\textwidth]{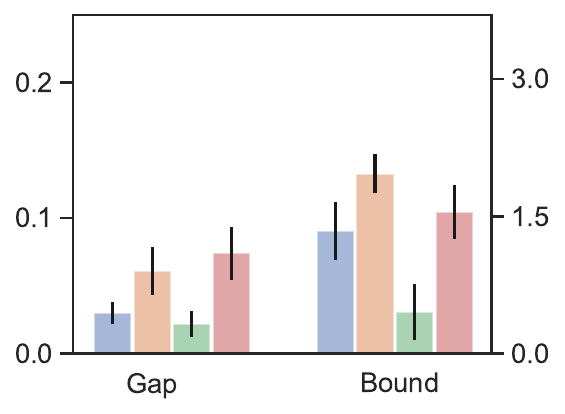}\label{fig:bacer_ratio}}
\subfigure[ENZYMES, 4 layers] {\includegraphics[width=0.245\textwidth]{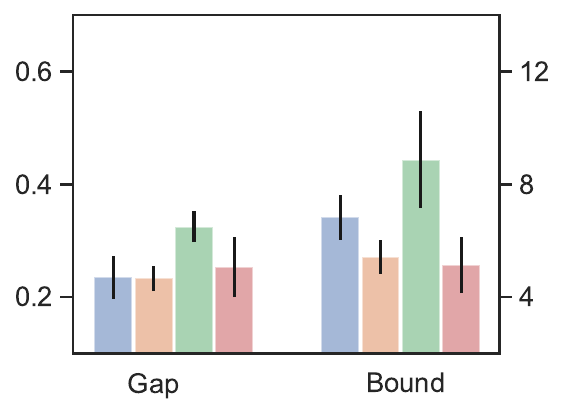}\label{fig:enzymes-4}}
\subfigure[ENZYMES, 6 layers] {\includegraphics[width=0.245\textwidth]{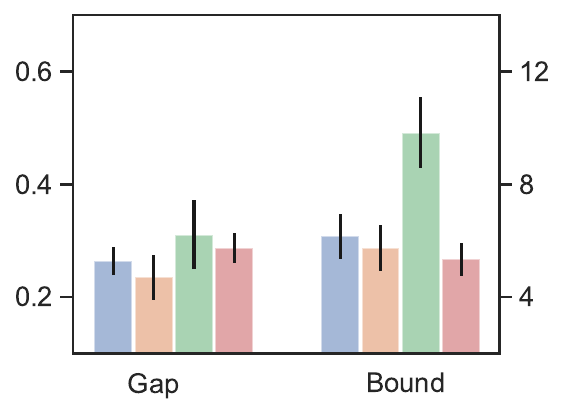}\label{fig:enzymes-6}}
\caption{Loss gaps and bounds}
\label{fig:patterns_gap}
\end{figure}

\begin{figure}[t!]
\centering
\subfigure[Loss gaps and bounds]{\vstretch{1.5}{\includegraphics[width=0.48\textwidth]{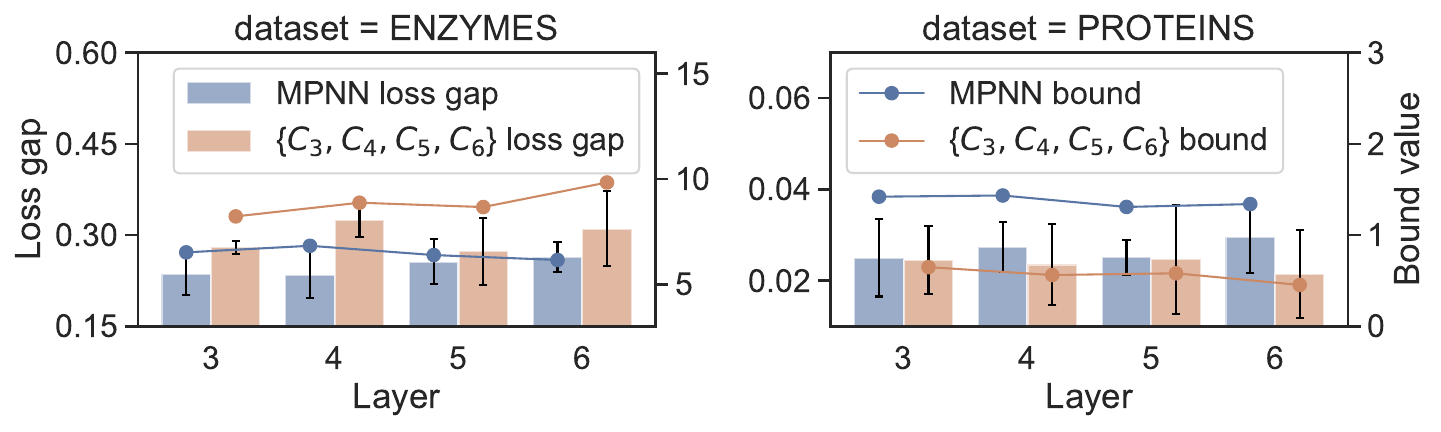}\label{subfig:gaps_layer}}}
\subfigure[1-Wasserstein distance and $\Lip(f)$]{\vstretch{1.5}{\includegraphics[width=0.51\textwidth]{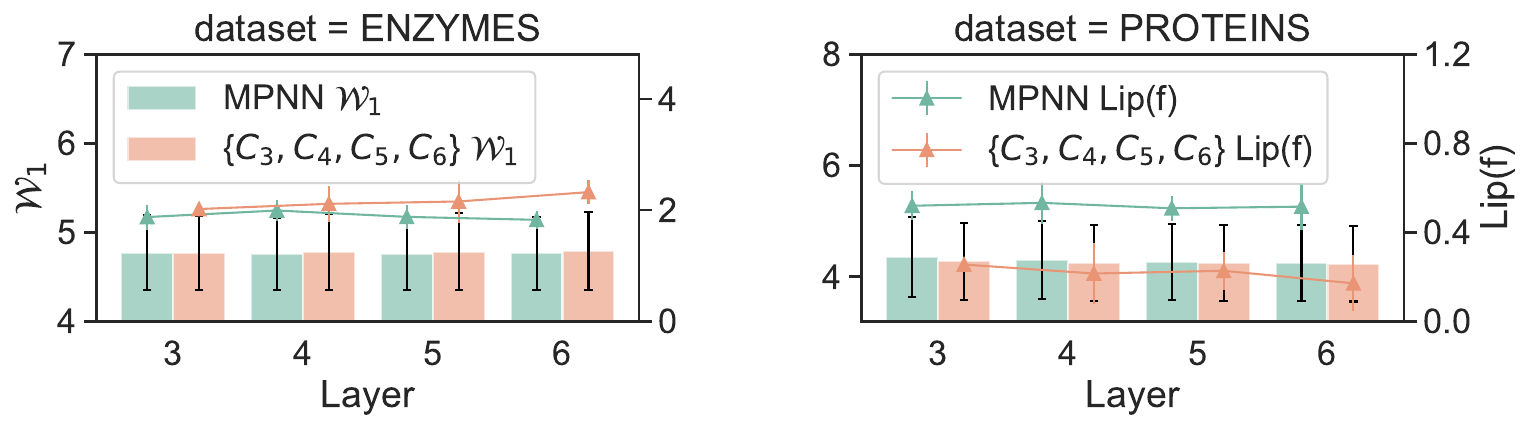}\label{subfig:ratio_layers}}}
\caption{\label{fig:bound_vs_layer}Bound of MPNN and $C_3$-MPNN}%
\end{figure}

\textbf{Why does more expressive power sometimes lead to better generalization?} 
In \cref{subfig:gaps_layer}, we observe two contrasting cases where increased expressivity worsens generalization (ENZYMES) and improves it (PROTEINS). To explore this further, we plot the changes of two major factors from the proposed bound in \cref{lemma:1w_sample_bound}: the 1-Wasserstein distance and $\Lip(f)$, both shown in \cref{subfig:gaps_layer}.
The 1-Wasserstein distance ($\Wc_1$) is computed as:
$
\frac{1}{n} \sum_{j=1}^n \mathcal{W}_1\bigl(\fa_\sharp(\mu_{c,T^j}), \fa_\sharp({\mu}_{c,\tilde{T}^j})\bigr)
$
averaged over all graph classes. We plot these factors over four layers for both MPNN and $\{C_3,C_4,C_5,C_6\}$-MPNN. 
We observe that the inclusion of homomorphism counts worsens generalization in ENZYMES but improves it in PROTEINS. From \cref{subfig:gaps_layer}, this can be attributed to the joint influence of the Wasserstein distance and $\Lip(f)$. In ENZYMES, both the 1-Wasserstein distance and $\Lip(f)$ increase slightly when homomorphism counts are added. 
While this additional expressivity leads to better separation between graphs, in ENZYMES, this increased separation hinders the ability to achieve good concentration within each graph class, ultimately worsening generalization. 
In contrast, for PROTEINS, although the inclusion of homomorphism counts leads to greater graph separation, it also slightly reduces the 1-Wasserstein distance within each class, allowing for better concentration. This improved separation significantly reduces $\Lip(f)$, resulting in enhanced generalization.

\textbf{Can generalization be improved by controlling the Lipschitz constants?} 
Last but not least, since $\Lip(f)$ plays a crucial role in the proposed bound, we aim to investigate whether controlling $\Lip(f)$ can serve as an effective strategy to enhance generalization. A straightforward approach to control $\Lip(f)$ is through normalization techniques. As demonstrated earlier, normalization effectively bounds the diameter of $\phi_\sharp(\mu)$, which, in turn, constrains the encoder's boundedness and subsequently $\Lip(f)$.
To test this, we apply $l_1$-normalisation in the last layer of the MPNN. See \cref{tab:graph_layer_gap_norm} for results. It is evident that normalization reduces the generalization gap across all datasets. This improvement is also reflected in the computed bounds.
Interestingly, the least improvement is observed in the SIDER dataset, where $\Lip(f)$ is already relatively small, and the embeddings are well-concentrated even before normalization. This suggests that the impact of normalization is more pronounced when $\Lip(f)$ is large or when the embeddings are not already well-concentrated.\looseness=-1

\begin{table}[t!]
    \caption{Graph classification gaps with different numbers of MPNN layers. The MPNN embeddings are normalized.}
    \label{tab:graph_layer_gap_norm}
    \centering
    \resizebox{0.8\textwidth}{!}{
    \begin{tabular}{clrrrrr}
    \toprule
    \multirow{2}{*}{} &  & \multicolumn{4}{c}{Dataset} \\ \cmidrule{3-7}
     \# Layers&  & ENZYMES & PROTEINS & MUTAG & SIDER & BACE \\
     \midrule
     \multirow{4}{*}{1} & Acc. gap & 14.67\sd{2.85}&-1.74\sd{0.97}&-8.82\sd{1.65}&1.69\sd{1.77}&-0.34\sd{1.26} \\
     & Loss gap & 0.105\sd{0.010}&-0.018\sd{0.009}&-0.091\sd{0.017}&0.013\sd{0.013}&-0.004\sd{0.010} \\
     & Our Bound & 0.800\sd{0.095}&2.203\sd{0.134}&1.101\sd{0.063}&1.137\sd{0.552}&1.147\sd{0.143} \\
     \midrule
     \multirow{3}{*}{2} & Acc. gap & 14.85\sd{3.56}&-1.84\sd{1.25}&-9.53\sd{1.98}&1.41\sd{2.19}&0.16\sd{1.11} \\
     & Loss gap & 0.098\sd{0.022}&-0.023\sd{0.011}&-0.097\sd{0.019}&0.015\sd{0.006}&0.000\sd{0.010} \\
     & Our Bound & 0.586\sd{0.036}&1.016\sd{0.035}&1.208\sd{0.046}&1.017\sd{0.644}&1.089\sd{0.135} \\
     \midrule
     \multirow{3}{*}{3} & Acc. gap & 19.70\sd{3.74}&-2.72\sd{1.27}&-7.88\sd{0.79}&3.71\sd{1.50}&-0.55\sd{1.67} \\
     & Loss gap & 0.118\sd{0.023}&-0.027\sd{0.011}&-0.083\sd{0.006}&0.030\sd{0.008}&-0.006\sd{0.015} \\
     & Our Bound & 0.572\sd{0.024}&0.834\sd{0.015}&0.993\sd{0.039}&1.221\sd{0.957}&1.167\sd{0.610} \\
     \midrule
     \multirow{3}{*}{4} & Acc. gap & 20.78\sd{2.18}&-0.45\sd{0.68}&-8.23\sd{1.14}&2.98\sd{1.61}&0.28\sd{2.42} \\
     & Loss gap & 0.129\sd{0.007}&-0.004\sd{0.005}&-0.087\sd{0.011}&0.026\sd{0.015}&0.001\sd{0.024} \\
     & Our Bound & 0.573\sd{0.027}&0.847\sd{0.027}&0.848\sd{0.085}&1.039\sd{0.898}&0.705\sd{0.026} \\
     \midrule
     \multirow{3}{*}{5} & Acc. gap & 24.70\sd{3.08}&0.55\sd{3.69}&-8.47\sd{1.26}&1.06\sd{4.64}&0.07\sd{1.85} \\
     & Loss gap & 0.169\sd{0.014}&0.003\sd{0.035}&-0.086\sd{0.013}&0.006\sd{0.038}&0.002\sd{0.014} \\
     & Our Bound & 0.575\sd{0.039}&0.713\sd{0.246}&0.799\sd{0.051}&0.923\sd{0.438}&0.703\sd{0.012} \\
     \midrule
     \multirow{3}{*}{6} & Acc. gap & 23.33\sd{2.75}&0.08\sd{3.31}&-10.24\sd{0.87}&3.44\sd{0.91}&-1.17\sd{1.55}\\
     & Loss gap & 0.169\sd{0.023}&-0.002\sd{0.032}&-0.104\sd{0.008}&0.029\sd{0.009}&-0.013\sd{0.015} \\
     & Our Bound & 0.603\sd{0.032}&0.793\sd{0.136}&0.778\sd{0.049}&1.192\sd{0.561}&0.679\sd{0.018} \\
     \bottomrule
    \end{tabular}%
    }

\end{table}

\section{Conclusion and limitations}
\label{sec:con}
In this work, we examine the generalization of GNNs from a margin-based perspective, based on the work by \citet{Chuang2021-ik}. The bounds use $1$-variance and optimal transport to analyze graph embeddings. We establish a relationship between generalization and the expressive capacity of GNNs, deriving a generalization bound that demonstrates how well-clustered embeddings and separable classes lead to improved generalization. Through case studies on a real-world dataset, we empirically validate these theoretical findings. We also apply empirical sample-based bounds to graph classification tasks, confirming that our theoretical results align with empirical evidence. Our work enables analyzing the generalization of graph encoders through their bounded expressive power.

Nonetheless, our work has some limitations. While we validate the framework on real-world datasets, further large-scale studies across a wider range of datasets and applications are needed to fully establish the proposed approach's general applicability.

\subsection*{Acknowledgements}
This research was supported partially by the Australian Government through the Australian Research Council's Discovery Projects funding scheme (project DP210102273).

\clearpage
\bibliographystyle{plainnat}
\bibliography{references}  

\appendix
\section{Additional related work}\label{sec:apprelwork}
We provide additional references related to the expressiveness of Graph Neural Networks (GNNs). The connection with the Weisfeiler-Leman ($\WL$) test has led to the development of high-order GNNs that surpass $\WL$ and are bounded by the $k$-dimensional Weisfeiler-Leman test ($\WLF{k}$)~\citep{morris2019weisfeiler,maron2019provably,morris2020weisfeiler,GeertsR22}. The method by \citet{morris2019weisfeiler,morris2020weisfeiler} is strictly weaker than $\WLF{k}$, whereas the method by \citet{maron2019provably} can match the expressiveness of $\WLF{k}$. However, these higher-order GNNs incur significant computational costs, rendering them impractical for large-scale datasets.

Incorporating substructure counts has been shown to be an effective strategy for enhancing GNN expressivity beyond $\WL$~\citep{Bouritsas2023-hj,Barcelo2021-rs}. \citet{Bouritsas2023-hj} integrate isomorphism counts of small subgraph patterns into the node and edge features of graphs, while \citet{Barcelo2021-rs} employ a similar approach using homomorphism counts. Building on this concept, \citet{thiede21auto} implemented convolutions on automorphism groups of subgraph patterns. Rather than directly using subgraph counts, \citet{wijesinghe022, Wang_undated-iz} propose integrating local structural information into neighbor aggregation. This approach suggests that the expressivity of the model increases with the subgraph pattern size and aggregation radius. \looseness=-1

Taking a different approach, \citet{Nguyen2020hom} explore the use of graph homomorphism counts directly in convolutions without message passing, demonstrating their universality in approximating invariant functions. \citet{Welke_2023-xn} propose combining homomorphism counts with GNN outputs in the final layer to improve expressivity. Additionally, \citet{bevilacqua22equivariant} represent graphs as collections of subgraphs derived from a predetermined policy. \citet{zhao22stars} and \citet{zhang21nested} extend this idea by representing graphs with a set of induced subgraphs. These methods are closely related to graph kernel techniques that utilize subgraph patterns~\citep{Shervashidze2011-lw, horvath04, costa10}.

Since the $\mathsf{WL}$-based GNN expressivity hierarchy is inherently coarse and qualitative, \citet{zhang2024-quantitative} propose a homomorphism-based expressivity framework, which enables direct comparisons of expressivity between common GNN models. As $\WL$ and $\WLF{k}$ have equivalent translations in homomorphism embeddings~\citep{Dell2018-bg}, both MPNNs and higher-order GNNs can be expressed using homomorphism representations within this framework. Given that homomorphism embeddings are theoretically isomorphism-complete, this framework offers not only a unified but also a complete description of GNN expressivity.

\section{Proofs of \cref{sec:graphenc}}\label{app:proofsecfour}

\factorization*
\begin{proof}
We define the function
$f:\Zc_\phi\to\Zc_{\phi'}$, as follows. Let $z\in \Zc_\phi$ and $G\in\Gc$ such that $\phi(G)=z$. Then,
define $f(z):=\phi'(G)\in\Zc_{\phi'}$. Observe first that $f$ is well-defined. Indeed, if we take another $G'\in\Gc$ such that $\phi(G')=z$, then $\phi(G)=\phi(G')$ and hence also $\phi'(G)=\phi'(G')=f(z)$ since $\phi\sqsubseteq \phi'$ by assumption.
Clearly, $
f\circ \phi =\phi'$, by definition
\end{proof}

\SB*
\begin{proof}
We have 
$d_{\Zc_{\phi'}}(f(\phi(G),f(\phi(H))=0$ in case
$\phi(G)=\phi(H)$ and otherwise,
when $\phi(G)\neq\phi(H)$ also 
$\phi'(G)\neq\phi'(H)$ and hence
$$
d_{\Zc_{\phi'}}(f(\phi(G),f(\phi(H))=
d_{\Zc_{\phi'}}(\phi'(G),\phi'(H))\leq  B.$$
Since $S\leq d_{\Zc_\phi}(\phi(G),\phi(H))$ we have
$$
d_{\Zc_{\phi'}}(f(\phi(G),f(\phi(H))
\leq  (B/S) d_{\Zc_\phi}(\phi(G),\phi(H)),$$
and thus 
$$d_{\Zc_{\phi'}}(f(z),f(z'))
\leq  (B/S) d_{\Zc_\phi}(z,z')
$$
since the domain of $f$ is the codomain of $\phi$. We may thus conclude that $\Lip(f)\leq \frac{B}{S}$, as desired.
\end{proof}

\Lipschitzfactor*
\begin{proof}
We first show that $f\circ\phi=\phi'$ implies the $f_\sharp\bigl(\phi_\sharp(\mu)\bigr)=\phi'_\sharp(\mu)$ of the corresponding pushforward distribution of any distribution $\mu$ om $\Gc$. Indeed, this simply follows from the definitions. One the one hand, for $I\subseteq\Zc_{\phi'}$
$$\phi_\sharp'(\mu)(I):=\mu\bigl(
\{G\in\Gc\mid \phi'(G)\in I\}
\bigr).$$
On the other hand,
\begin{align*}
f_\sharp\bigl(\phi_\sharp(\mu)\bigr)(I)&=\phi_\sharp(\mu)\bigl(
\{z\in\Zc_\phi\mid f(z)\in I\}
\bigr)\\
&=\mu\bigl(G\in\Gc\mid f(\phi(G))\in I
\bigr).
\end{align*}
The equality then follows from $f\circ\phi=\phi'$.
We assume that $f$ is Lipschitz-continuous with $\Lip(f)<\infty$ (otherwise the inequality is satisfied by default and there is nothing to prove).
We show that
$$
\Wc_1\myl(\phi_\sharp'(\mu), \phi_\sharp'(\nu)\myr) \leq  \operatorname{Lip}(f) \cdot \Wc_1\myl(\phi_\sharp(\mu), \phi_\sharp(\nu)\myr).
$$
Let $L_1(\Zc_\phi)$ be the set of 1-Lipschitz functions on $\Zc_\phi$. We use the Kantorovich-Rubinstein dual form of $\Wc_1$, as follows:
\begin{align*}
\Wc_1\myl(\phi_\sharp(\mu), \phi_\sharp(\nu)\myr) & = \sup_{g\in L_1(\Zc_\phi)}\mathbb{E}_{z\sim \phi_\sharp(\mu)}[g(z)] - \mathbb{E}_{z\sim \phi_\sharp(\nu)}[g(z)] \\ 
&= \sup_{g\in L_1(\Zc_\phi)}\int_{\Zc_\phi} g(z)\; \mathrm{d}(\phi_\sharp(\mu) - \phi_\sharp(\nu))(z).
\end{align*}
Note that if $g\in L_1(\Zc)$ then $\frac{1}{\Lip(f)}f\circ g \in L_1(\Zc)$ as well. Then, using our earlier observation about pushforward distributions,
\allowdisplaybreaks
\begin{align*}
    \Wc_1\myl(\phi_\sharp'(\mu), \phi_\sharp'(\nu)\myr) & = \Wc_1\myl(f_\sharp\bigl(\phi_\sharp(\mu)\bigr), f_\sharp\bigl(\phi_\sharp(\nu)\bigr)\myr)\\
    & = \sup_{g\in L_1(\Zc_{\phi'})}\int_{\Zc_{\phi'}} g(z)\: \mathrm{d}\myl(f_\sharp\bigl(\lambda_\sharp(\mu)\bigr) - f_\sharp\bigl(\lambda_\sharp(\nu)\bigr)\myr)(z)\\
    & = \sup_{g\in L_1(\Zc_{\phi'})}\int_{\Zc_{\phi'}} g(z)\: \mathrm{d}f_\sharp(\phi_\sharp(\mu) - \phi_\sharp(\nu))(z)\\
    & = \sup_{g\in L_1(\Zc_{\phi'})}\int_{\Zc_{\phi'}} g\circ f(z)\:\mathrm{d}(\phi_\sharp(\mu) - \phi_\sharp(\nu))(z)\\
    & = \Lip(f) \sup_{g\in L_1(\Zc_{\phi'})}\int_{\Zc_{\phi'}} \frac{g\circ f(z)}{\Lip(f)} \mathrm{d}(\mu - \nu)(z)\\
    & \leq \Lip(f) \sup_{h\in L_1(\Zc_{\phi})}\int_{\Zc_{\phi}} h(x)\: \mathrm{d}(\mu - \nu)(z)\\
    & = \Lip(f) \cdot \Wc_1\myl(\phi_\sharp(\mu), \phi_\sharp(\nu)\myr),
\end{align*} 
as desired.
\end{proof}

\section{Proofs and details of \cref{sec:genal}}\label{app:secfive}

We start by restating Theorem 2 from \citet{Chuang2021-ik} using encoders $\phi$ from some general set $\Xc$ to $\Zc$.

\begin{theorem}[Theorem 2 in \citet{Chuang2021-ik}]
Fix \(\gamma > 0\) and an encoder \(\phi:\Xc\to\Zc\).  Then, for every distribution $\mu$ on $\Xc\times\Yc$, for every predictor $\psi=(\psi_y)_{i\in\Yc}$ and every $\delta\in(0,1)$, with probability at least $1-\delta$ over all choices of $\Sc\sim \mu^m$, we have that the generalization gap $R_\mu(\psi \circ \phi) - \hat{R}_{\gamma, \Sc}(\psi \circ \phi)$ is upper bounded by
\begin{equation*}  
\mathbb{E}_{c \sim \mu_y} \left[ \frac{\operatorname{Lip}\left(\rho_\psi(\cdot, c)\right)}{\gamma} 
        \mathbb{E}_{T,\tilde{T} \sim \mu_{c}^{m_c}} \left[ 
        \mathcal{W}_1\bigl(\phi_\sharp(\mu_{c,T}), \phi_\sharp(\mu_{c,\tilde{T}})\bigr) 
         \right]
    \right]  + \sqrt{\frac{\log (1 / \delta)}{2m}}, 
\end{equation*}
where for each $c\in\Yc$, $m_c$ denotes the number of pairs $(X,c)$ in $\Sc$.  Also, recall that for $T\sim \mu_c^{m_c}$,
$\mu_{c,T}$ is the empirical distribution $\mu_{c,T}:=\sum_{X\in T} \delta_{X}$; similarly for $\mu_{c,\tilde T}$.
\end{theorem}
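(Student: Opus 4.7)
The plan is the classical margin-based bounding strategy combined with Kantorovich--Rubinstein duality and a double-sampling step, applied directly (not via the paper's \cref{lemma:1w_expectation_bound}). First I introduce the ramp margin surrogate
$\ell_\gamma(z, c) := \min\bigl(1, \max(0, 1 - \rho_\psi(z,c)/\gamma)\bigr)$,
which is bounded in $[0,1]$, satisfies $\mathbbm{1}_{\rho_\psi(z,c)\leq 0} \leq \ell_\gamma(z,c) \leq \mathbbm{1}_{\rho_\psi(z,c)\leq \gamma}$, and is Lipschitz in $z$ with constant $\operatorname{Lip}(\rho_\psi(\cdot,c))/\gamma$. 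Sandwiching then gives $R_\mu(\psi\circ\phi)-\hat R_{\gamma,\Sc}(\psi\circ\phi) \leq F(\Sc)$, where $F(\Sc) := \mathbb{E}_{\mu}[\ell_\gamma(\phi(G),y)] - \mathbb{E}_{\mu_\Sc}[\ell_\gamma(\phi(G),y)]$, reducing the task to an upper bound on $F(\Sc)$.

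For the concentration piece I apply McDiarmid's inequality to $\Sc \mapsto F(\Sc)$. Because $\ell_\gamma \in [0,1]$, swapping a single example in $\Sc$ moves $F(\Sc)$ by at most $1/m$, so with probability at least $1-\delta$, $F(\Sc) \leq \mathbb{E}_\Sc[F(\Sc)] + \sqrt{\log(1/\delta)/(2m)}$. This delivers the additive $\sqrt{\log(1/\delta)/(2m)}$ summand in the statement; what remains is to upper bound $\mathbb{E}_\Sc[F(\Sc)]$ by the stated Wasserstein expression.

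To do this, I decompose $F$ by label. Writing $\mathbb{E}_\mu[\ell_\gamma(\phi(\cdot),\cdot)] = \sum_c \mu_y(c)\,\mathbb{E}_{G\sim\mu_c}[\ell_\gamma(\phi(G),c)]$ and $\mathbb{E}_{\mu_\Sc}[\ell_\gamma(\phi(\cdot),\cdot)] = \sum_c (m_c/m)\,\mathbb{E}_{G\sim\mu_{c,\Sc}}[\ell_\gamma(\phi(G),c)]$, and using $\mathbb{E}_\Sc[m_c/m]=\mu_y(c)$ together with conditional independence of the class-$c$ points in $\Sc$ given $m_c$, the expected gap reduces to the class-averaged per-class discrepancy $\mathbb{E}_{c\sim\mu_y}\bigl[\mathbb{E}_{m_c}\mathbb{E}_{\Sc\mid m_c}[\mathbb{E}_{\mu_c}\ell_\gamma(\phi(\cdot),c) - \mathbb{E}_{\mu_{c,\Sc}}\ell_\gamma(\phi(\cdot),c)]\bigr]$. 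For fixed $c$ and fixed $m_c$, Kantorovich--Rubinstein duality applied to the pushforward measures on $\Zc_\phi$ bounds the innermost difference by $(\operatorname{Lip}(\rho_\psi(\cdot,c))/\gamma)\,\Wc_1(\phi_\sharp\mu_c,\phi_\sharp\mu_{c,\Sc})$, using the chain-rule Lipschitz bound on $\ell_\gamma(\cdot,c)$. A Jensen step, exploiting $\mu_c = \mathbb{E}_{T\sim\mu_c^{m_c}}[\mu_{c,T}]$ and joint convexity of $\Wc_1$, replaces $\Wc_1(\phi_\sharp\mu_c,\phi_\sharp\mu_{c,\Sc})$ in expectation (conditional on $m_c$) by the double-sampling quantity $\mathbb{E}_{T,\tilde T\sim\mu_c^{m_c}}[\Wc_1(\phi_\sharp\mu_{c,T},\phi_\sharp\mu_{c,\tilde T})]$. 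Reassembling produces exactly the first summand in the conclusion.

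The main technical subtlety is the bookkeeping around the random class counts $m_c$: Kantorovich--Rubinstein and Jensen must be applied \emph{conditionally} on $m_c$ so that the two fresh draws $T,\tilde T$ have matching cardinality, reproducing the distribution $\mu_c^{m_c}$ that appears in the statement; only the outer expectation is over $c\sim\mu_y$ and over the marginal law of $m_c$. One must also confirm that the Lipschitz constant of $\ell_\gamma(\cdot,c)$ is measured in the same metric that $\Wc_1$ uses on the pushforwards, which is immediate from the chain-rule bound $\operatorname{Lip}(\ell_\gamma(\cdot,c)) \leq \operatorname{Lip}(\rho_\psi(\cdot,c))/\gamma$ and the fact that both quantities are taken in the ambient Euclidean metric on $\Zc_\phi \subseteq \mathbb{R}^d$.
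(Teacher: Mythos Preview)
The paper does not give its own proof of this statement: it is a verbatim restatement of Theorem~2 from \citet{Chuang2021-ik}, imported as a black box in \cref{app:secfive} and then combined with \cref{prop:wasserineq} to obtain the paper's \cref{lemma:1w_expectation_bound}. So there is no in-paper argument to compare against; you are effectively reconstructing the cited proof.

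Your reconstruction follows the expected route---ramp surrogate, McDiarmid on the bounded-difference functional, class-wise Kantorovich--Rubinstein, and a Jensen/double-sampling replacement of $\mu_c$ by an empirical copy---and each step is sound. Two small points are worth flagging. First, because both $\phi$ and $\psi$ are fixed before $\Sc$ is drawn (the ``for every $\psi$'' sits outside the probability), one has $\mathbb{E}_\Sc[F(\Sc)]=0$ exactly; your class decomposition and KR/Jensen machinery therefore produce a nonnegative upper bound on zero. This is correct but means the Wasserstein summand is not doing deductive work here---McDiarmid alone already yields the $\sqrt{\log(1/\delta)/(2m)}$ term, and the Wasserstein expression is retained for its structural/interpretive content rather than to close the inequality. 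Second, the bound as stated uses the \emph{realized} class counts $m_c$ from $\Sc$, whereas your derivation naturally lands on an extra outer expectation $\mathbb{E}_{m_c}[\cdot]$. Since the whole Wasserstein term is a nonnegative upper bound on $0$ either way, this mismatch is harmless: any nonnegative version (realized or averaged) is a valid right-hand side.
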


To obtain \cref{lemma:1w_expectation_bound} we replace $\Xc$ by $\Gc$ and consider graph encoders $\phi:\Gc\to \Zc_\phi$ and $\lambda:\Gc\to\Zc_\lambda$ such that $\lambda$ upper bounds $\phi$ in expressive power. Then, \cref{lem:factor} ensures the existence of $f$ such that $\phi=f\circ\lambda$ and \cref{prop:wasserineq} consequently implies
$
\mathcal{W}_1\myl(\phi_\sharp'(\mu_{c,T}), \phi_\sharp'(\mu_{c,\tilde T})\myr) \leq \operatorname{Lip}(f) \cdot \mathcal{W}_1\myl(\phi_\sharp(\mu_{c,T}), \phi_\sharp(\mu_{c,\tilde T})\myr)$ for any $T,\tilde T\sum \mu_c^{m_c}$. Plugging this into the bound above results in the bound given in \cref{lemma:1w_expectation_bound}.

While the bound in \cref{lemma:1w_expectation_bound} is theoretically useful, the expectation term over \(T,\tilde{T} \sim \mu_c^{m_c}\) is intractable in general. To address this drawback, we derive another bound in \cref{lemma:1w_sample_bound}, which can be computed via sampling in practice and is the one used in our experiments.

\begin{theorem}
\label{lemma:1w_sample_bound}
Let \(\{T^j, \tilde{T}^j\}_{j=1}^n\) be \(n\) pairs of graph samples where each \(T^j, \tilde{T}^j \sim \mu^{\lfloor m_c/2n \rfloor}_{c}\), \(m = \sum_{c=1}^K \lfloor m_c / 2n \rfloor\), 
and \(\Delta(\cdot)\) be the diameter of a space. For any Lipschitz continuous function \(f: \Zc_{\phi} \to \Zc_{\lambda}\) such that \(\phi = f \circ \fa\), with probability at least \(1 - \delta\) for samples $\Sc\sim\mu^m$, we have 
    \begin{align*}
        & R_\mu(\psi \circ \phi) - \hat{R}_{\gamma, \Sc}(\psi \circ \phi) \leq 
        \sqrt{\frac{\log (2 / \delta)}{2m}} + \\
        & \mathbb{E}_{c \sim \sigma} \left[
        \frac{\operatorname{Lip}\left(\rho_\psi(\cdot, c)\right)\operatorname{Lip}(f)}{\gamma}            
        \left(
            \frac{1}{n} \sum_{j=1}^n 
            \mathcal{W}_1\bigl(\fa_\sharp(\mu_{c,T^j}), \fa_\sharp({\mu}_{c,\tilde{T}^j})\bigr)
            + 2\Delta(\lambda_\sharp(\mu_c))\sqrt{\frac{\log(2K / \delta)}{n \lfloor m_c / 2n \rfloor}}
        \right)
        \right].
    \end{align*}
\end{theorem}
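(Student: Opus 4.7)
The plan is to derive a sample-based (computable) upper bound from the expectation-based bound of \cref{lemma:1w_expectation_bound} by replacing the intractable inner expectation with an empirical average over the $n$ sampled pairs $\{(T^j, \tilde{T}^j)\}_{j=1}^n$ and controlling the approximation error by a concentration inequality. The argument proceeds in three stages: invoking the expectation bound with a halved confidence budget, concentrating the empirical Wasserstein estimator, and combining via union bounds.

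First, I would invoke \cref{lemma:1w_expectation_bound} with confidence parameter $\delta/2$ in place of $\delta$, which uses half the probability budget and yields the first additive term $\sqrt{\log(2/\delta)/(2m)}$. This bound holds with probability at least $1-\delta/2$ over $\Sc \sim \mu^m$ and gives an upper bound on the generalization gap that still contains the inner expectation $\mathbb{E}_{T,\tilde{T} \sim \mu_c^{\lfloor m_c/2n\rfloor}}[\Wc_1(\lambda_\sharp(\mu_{c,T}), \lambda_\sharp(\mu_{c,\tilde{T}}))]$ for each class $c$, applied at the subsample scale $\lfloor m_c/2n\rfloor$ so that the $n$ subsequent pairs fit into the class-$c$ portion of $\Sc$.

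Second, for each fixed class $c$, I would view the empirical average $\frac{1}{n}\sum_{j=1}^n \Wc_1\bigl(\lambda_\sharp(\mu_{c,T^j}), \lambda_\sharp(\mu_{c,\tilde{T}^j})\bigr)$ as a function of the $2n\lfloor m_c/2n\rfloor$ i.i.d.\ graphs drawn from $\mu_c$. Since the supports of both empirical pushforward distributions lie inside $\supp(\lambda_\sharp(\mu_c))$, each Wasserstein term is bounded above by $\Delta(\lambda_\sharp(\mu_c))$; moreover, altering a single graph moves a mass of $1/\lfloor m_c/2n\rfloor$ from one atom to another, perturbing the corresponding Wasserstein term by at most $\Delta(\lambda_\sharp(\mu_c))/\lfloor m_c/2n\rfloor$ and the overall $n$-average by at most $\Delta(\lambda_\sharp(\mu_c))/(n\lfloor m_c/2n\rfloor)$. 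Applying McDiarmid's inequality with these bounded differences controls the deviation of the empirical average from its expectation by a term proportional to $\Delta(\lambda_\sharp(\mu_c))\sqrt{\log(1/\delta')/(n\lfloor m_c/2n\rfloor)}$. A union bound over the $K$ classes with $\delta' = \delta/(2K)$ produces the $\log(2K/\delta)$ factor and yields simultaneous validity across classes with probability at least $1-\delta/2$; combining with the first stage via a further union bound restores the overall $1-\delta$ guarantee, and plugging the resulting deterministic upper estimate of the expectation into the outer $\mathbb{E}_{c\sim\mu_y}$ from \cref{lemma:1w_expectation_bound} gives the stated inequality.

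The main obstacle I expect is pinning down the precise constants in the concentration term, specifically the factor of $2$ multiplying $\Delta(\lambda_\sharp(\mu_c))$ and the denominator $n\lfloor m_c/2n\rfloor$ rather than just $n$. The denominator choice reflects that McDiarmid is applied at the level of individual graph draws (where the bounded-difference constant shrinks with the within-pair sample size) rather than at the level of pair-wise Wasserstein values, so both the number of pairs and the pair size contribute to the concentration rate. The leading factor of $2$ most plausibly arises either from using a one-sided Hoeffding/McDiarmid inequality combined with an additional triangle-inequality step needed to relate the average of subsample-based $\Wc_1$ values to the quantity appearing in \cref{lemma:1w_expectation_bound}, or from a slightly loose constant absorbed for clarity; tracking this carefully is the delicate bookkeeping step.
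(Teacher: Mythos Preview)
Your proposal is correct and arrives at the same inequality, but the route differs slightly from the paper's. The paper does not re-derive the concentration step: it directly invokes Corollary~6 of \citet{Chuang2021-ik}, which already gives the sample-based bound (with the $\frac{1}{n}\sum_j \Wc_1(\cdot)$ average and the $2\Delta(\cdot)\sqrt{\log(2K/\delta)/(n\lfloor m_c/2n\rfloor)}$ term) for a generic encoder $\phi$, and then applies \cref{lem:factor} and \cref{prop:wasserineq} to replace each $\Wc_1(\phi_\sharp(\cdot),\phi_\sharp(\cdot))$ and $\Delta(\phi_\sharp(\mu_c))$ by $\operatorname{Lip}(f)$ times the corresponding $\lambda$-quantities. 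You instead start from \cref{lemma:1w_expectation_bound} (which already carries the $\operatorname{Lip}(f)$ factor) and then reconstruct the expectation-to-sample concentration via McDiarmid and a union bound over classes---this is precisely the argument that Chuang et~al.\ use to pass from their Theorem~2 to their Corollary~6. So the two proofs differ only in whether the Lipschitz factorisation is applied before or after the concentration step, and in whether the concentration is cited or spelled out; your version is more self-contained, the paper's is shorter. Your caveat about the constant~$2$ is well placed: it is inherited verbatim from Corollary~6, and tracing it requires looking inside that proof rather than anything specific to the graph setting.
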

The proof is again a consequence of \cref{lem:factor} and \cref{prop:wasserineq}, but this time relying on Corollary 6 in \citet{Chuang2021-ik}. We note that the diameter will be bounded when $B$-bounded graph encoders are considered.

\end{document}